\title{Entropic Optimal Transport in Random Graphs}
\date{}
\author{Nicolas Keriven\thanks{CNRS \& GIPSA-lab. 11 rue des Math\'ematiques, 38400 St-Martin-d’Her\`es, France. \url{firstname.name@cnrs.fr}\\
This work was partly supported by ANR GRandMa ANR-21-CE23-0006.
NK thanks G. Peyré for many fruitful discussions.}}
\begin{document}

\maketitle

\begin{abstract}
  In graph analysis, a classic task consists in computing similarity measures between (groups of) nodes. In latent space random graphs, nodes are associated to unknown \emph{latent variables}. One may then seek to compute distances \emph{directly in the latent space}, using only the graph structure. In this paper, we show that it is possible to consistently estimate entropic-regularized Optimal Transport (OT) distances between groups of nodes in the latent space.
  We provide a general stability result for entropic OT with respect to perturbations of the cost matrix. We then apply it to several examples of random graphs, such as graphons or $\epsilon$-graphs on manifolds. Along the way, we prove new concentration results for the so-called Universal Singular Value Thresholding estimator, and for the estimation of geodesic distances on a manifold.
\end{abstract}

\section{Introduction}

Graphs are becoming increasingly popular to represent structured data in machine learning \citep{Hu2020}, such as social or physical networks, proteins interaction networks, molecules, 3D meshes, and so on. Given a large graph, a classic task in graph analysis consists in computing some \emph{similarity measure} between nodes or groups of nodes, e.g. for clustering or edge prediction purposes, or to define proximity-based node embeddings \citep{Rossi2020}.

More precisely, the framework we consider here is the following: the user observes a graph $G$ with nodes numbered $\{1,\ldots,N\}$, chooses two groups of target nodes $\{i_1,\ldots, i_n\}$ and $\{j_1,\ldots,j_m\}$ as well as potential weights over them $\alpha \in \RR_+^n, \beta \in \RR_+^m$, and wants to compute some distance between these two groups of weighted nodes.
For instance, in a social network context, one might select two groups of people according to some criterion (e.g. geographical location), and desire to know how ``close'' they are in terms of their underlying (unknown) preferences. On a 3D mesh, one may want to compare two entire regions with respect to the geodesic distance of the underlying manifold \citep{Peyre2010}.

There are many metrics to compare individual nodes on graphs, such as the shortest path distance, resistance distance \citep{Klein1993}, random walk-based distances \citep{Lovasz1993,Camby2018}, or a metric between node embeddings \citep{Grover2016, Goyal2017}. In this paper, we assume that the graphs are generated by \textbf{latent space random graph (RG) models} \citep{Hoff2002, Tang2013b, Yin2013, Smith2019}, that is, each node is associated with an unknown latent variable $z_i \in\RR^d$, and edges are randomly drawn between nodes with ``similar'' variables, as measured by a \emph{connectivity kernel} (Sec.~\ref{sec:background}). Unsurprisingly, in many situations, metrics between nodes approximate ``true'' metrics in the latent space \cite{Bernstein2000}. The present paper extend this idea to \textbf{Optimal Transport} (OT) distances between \emph{groups} of nodes.

Many situations in machine learning require to compare groups of points, or more generally discrete distributions over groups of points \cite{Peyre2019}. Optimal Transport \citep{Villani2008, Peyre2019} has recently gained in popularity due to its flexibility, interpretable geometric properties, and computationally efficient algorithms. In particular, \cite{Cuturi2013a} showed that fast solvers, such as the so-called Sinkhorn's algorithm, can be used to solve \textbf{entropic-regularized} OT, which additionally enjoys good statistical properties \cite{Genevay2018, Mena2019}.
Computing OT distances between groups of points $\{x_1,\ldots, x_n\}$ and $\{y_1,\ldots, y_m\}$ (or distributions supported on them) requires the knowledge of a \emph{cost matrix} $C = [c(x_i,y_j)]_{ij}$ between each pairs of points, where $c$ is a function that indicates how costly it is to ``transport'' mass from $x_i$ to $y_j$.
In the context of random graphs, the $x_i,y_j$ are the unknown latent variables of the nodes of the graph, and the true cost $c(x_i,y_j)$ is unknown. This paper will thus examine the stability of OT distance to having only access to \emph{noisy estimates} $\hat C$ of the cost matrix.

\paragraph{Outline.} In this paper, we show that OT distances between latent variables can be consistently estimated from large RGs, or in other words: if one applies the OT methodology to some cost matrix derived from the graph, and if the graph follows an RG model, then one is in fact estimating a ``true'' underlying, much more interpretable, OT distance. We start with some background in Sec.~\ref{sec:background}, then prove a generic OT stability result in Sec. \ref{sec:stability}. We examine two classic, but fairly different, settings: RGs with fixed connectivity kernels between latent variables (referred to as ``non-local''), or kernels whose connectivity radius vanishes as the number of nodes increases (``local'', see Sec.~\ref{sec:background}). For the latter, we show in Sec.~\ref{sec:local} how using shortest paths in the graph leads to estimation of the OT distance, where the cost is the geodesic distance on the underlying manifold supporting the latent variables. In the former, we show in Sec.~\ref{sec:nonlocal} that the so-called Universal Singular Value Thresholding (USVT) estimator \cite{Chatterjee2015} leads to consistent estimation of the true OT cost for relatively sparse random graphs (degrees grow logarithmically). We also show how a Gaussian-like connectivity kernel can lead to faster concentration, for a very specific cost and regularization parameter. Some numerical illustrations are provided along the way, the code is available at \url{github.com/nkeriven/otrg}. 

\paragraph{Related work.} 
There is a vast literature on similarity measure between graph nodes \cite{Rossi2020}. Besides classical fixed distances such as the shortest-path distance or resistance distance \cite{Klein1993, VonLuxburg2014, Camby2018}, recent methods are often based on \emph{node embeddings} \cite{Goyal2017, Rossi2020}, that is, each node is associated to a vector such that some metric between them is meaningful with respect to some criterion or downstream task \cite{Grover2016, Velickovic2019}. In light of this, there has been substantial work on \emph{learning} node embeddings, most recently using Graph Neural Networks (GNNs), see \cite{Wu2019a, Hamilton2020, Goyal2017, Rossi2020}. In contrast, here we study a ``non-learned'' OT-based distance between discrete distributions on groups of nodes.

For RGs with non-local kernels, latent variables and pairwise distance estimation have a long history \cite{Hoff2002, Araya2019, Yin2013, Chatterjee2015}, which includes the vast field of community detection on Stochastic Block Models (SBM) \cite{VonLuxburg2008, Lei2015, Abbe2018}. Moreover, it is known that the Laplacian of the graph converges (in some sense) to an integral operator \cite{VonLuxburg2008, Rosasco2010, Lovasz2012}, which allows the study of the convergence of node embedding methods such as GNNs \cite{Keriven2020a, Keriven2021}.
For local kernels, shortest paths on meshed point clouds are generally associated with geodesic distances on manifolds \cite{Bernstein2000, Alamgir2012, Hwang2016, Davis2019}, which have numerous applications in shape analysis \cite{Peyre2010}. In this case, the Laplacian converges to a Laplace-Beltrami differential operator \cite{Belkin2007a, GarciaTrillos2019}, and OT can be used as a theoretical tool in this context to prove convergence results of some variational problems on graphs \cite{GarciaTrillos2016}. To our knowledge, this paper is however the first to make a direct, simple connection between OT and RGs.

Optimal Transport on graphs is often associated with the \emph{Gromov-Wasserstein distance} \cite{Memoli2011, Memoli2014}, an OT-based distance to compare \emph{different graphs} (and more generally metric spaces). On the contrary, here we exploit the random graph model to estimate an OT distance \emph{between nodes} of the same graph. Our results are based on a stability bound with respect to the matrix $C$ (Thm. \ref{thm:stability}). While the stability of optimization problems is a vast topic \cite{Bonnans2000}, there are surprisingly few works about this in OT, besides a few smoothness results \cite{Chen2016b}. Instead, several papers in computational OT seek to ``robustify'' OT by jointly optimizing the estimation of the cost matrix and the OT distance \cite{Carlier2020, Dhouib2020, Lin2020}. Here, we show instead how directly plugging an estimator $\hat C$ in Sinkhorn's algorithm leads to a consistent estimation of entropic OT.

\paragraph{Notations} For a matrix $M$, we denote by $M_{i:j, k:l}$ the rectangular submatrix with rows from $i$ to $j$ and columns from $k$ to $l$. For a function $f:\RR \to \RR$ and a matrix $M$, $f(M)$ is the matrix with $f$ applied entrywise, such as for instance $e^M$. We denote by $\odot$ the Hadamard (entrywise) product between vectors. The norm $\norm{\cdot}$ is the Euclidean norm for vectors, and the operator norm for matrices $\norm{M} = \sup \norm{Mx}/\norm{x}$. The norm $\norm{\cdot}_\infty$ is the maximal absolute value of the entries, for both vectors and matrices, and $\norm{M}_F \eqdef \sqrt{\sum_{ij} M_{ij}^2}$ is the Frobenius norm. The probability simplex is defined as $\Delta^n = \{x \in \RR_+^n; 1^\top x=1\}$. For a norm $\norm{\cdot}$, balls with radius $r$ are denoted by $\Bb_r^{\norm{\cdot}} = \{v \in \Vv; \norm{v}\leq r\}$, where the space $\Vv$ is always clear from the context.

\section{Background}\label{sec:background}

We start with some background material on entropic OT and latent space random graphs.

Let $\alpha \in \Delta^n,\beta \in \Delta^m$ be two discrete distributions and $C \in \RR_+^{n \times m}$ a \emph{cost matrix}. Usually, $\alpha$ and $\beta$ are associated to two sets $\{x_1,\ldots, x_n\}$ and $\{y_1,\ldots, y_m\}$, and $C$ is defined as $C_{ij} = c(x_i,y_j)$ for a certain cost function $c$. For $\epsilon \geq 0$, the \emph{regularized OT distance} \cite{Cuturi2013a} is defined as
\begin{align}
  \label{eq:primalOT}\tag{$\mathcal{P}_\epsilon$}
  &\Ww_\epsilon^C(\alpha, \beta) \eqdef \min_{P\in \Pi(\alpha, \beta)} \Ww_\epsilon^C(\alpha, \beta, P) \\
  &\quad\text{with } \Ww_\epsilon^C(\alpha, \beta, P) \eqdef \ps{P}{C} + \epsilon \text{KL}\pa{P | \alpha \otimes \beta} \notag
\end{align}
where $\Pi(\alpha, \beta) = \{P \in \RR_+^{n \times m} ~|~ P1_m = \alpha, P^\top 1_n = \beta\}$, $\ps{P}{C} = \sum_{ij} P_{ij} C_{ij}$ and $\text{KL}\pa{P | \alpha \otimes \beta} = \sum_{ij} P_{ij} \log\pa{\frac{P_{ij}}{\alpha_i \beta_j}}$ with the convention that $\text{KL}\pa{P | \alpha \otimes \beta}= +\infty$ if $\alpha_i \beta_j =0$ and $P_{ij}>0$. The ``normal'' (non-regularized) OT distance is obtained for $\epsilon=0$ \cite{Peyre2019}. For $\epsilon>0$, the problem \eqref{eq:primalOT} is strictly convex and the minimizer is unique, here denoted by $P^{C,\alpha, \beta}$ or $P^C$ for short. It is also known \cite{Peyre2019} that this optimal coupling has the form $P^C_{ij} = u_i K_{ij} v_j$ for some vector $u\in\RR^n$, $v\in\RR^m$ and $K = e^{-C/\epsilon}$. It can be found efficiently by the celebrated Sinkhorn's algorithm \cite{Cuturi2013a}, stochastic approaches \cite{Genevay2016}, block-coordinate ascent \cite[Chap.~4]{Peyre2019}, and so on. Note that, it is also possible to consider the KL divergence with respect to the uniform measure, however here we adopt the version found in \cite{Genevay2016} for normalization purposes.

A \emph{latent space} random graph \cite{Hoff2002} with adjacency matrix $A$ on $N$ nodes is generated as follows. To the nodes are associated latent variables $\{z_1, \ldots, z_N\} \subset \RR^d$, often \emph{unknown} and/or \emph{random}, and unweighted random edges are drawn independently as Bernoulli variables:
\begin{equation}\label{eq:random_edges}
  \forall i<j:\quad a_{ij} \sim \text{Bernoulli}(w_N(z_i, z_j))
\end{equation}
for some \emph{connectivity kernel} $w_N:\RR^d \times \RR^d \to [0,1]$.
The kernel $w_N$ is allowed to vary with the number of nodes $N$ to adjust the \textbf{sparsity} of the graph, that is, the ratio between the number of edges and the number of nodes. The most two common cases are: $w_N(z,z') = \rho_N w(z,z')$ with fixed kernel $w$ and sparsity-inducing coefficient $\rho_N \to 0$ , which we refer to as \textbf{non-local kernels} \cite{Lei2015, Araya2019}, and $w_N(z,z') = 1_{\norm{z-z'} \leq h_N}$ with decreasing radius $h_N \to 0$, referred to as \textbf{local kernels} \cite{GarciaTrillos2019}. The former includes Erdös-Rényi graphs, SBMs, graphons \cite{Lovasz2012}, and are routinely used for instance in social network analysis \cite{Hoff2002, Goldenberg2009}. The latter is usually called $\epsilon$-graphs\footnote{Here we use $h$ instead of $\epsilon$ to denote the connectivity radius to avoid conflict with the regularization parameter.}, and are popular in shape analysis \cite{Peyre2010}.

As we mentioned in the introduction, our settings are the following: a random graph with $N$ nodes is observed, and the user chooses two groups of target nodes of size $n$ and $m$ with associated weights $\alpha, \beta$ that they want to compare. Since the nodes of a graph can be arbitrarily re-ordered, without lost of generality we assume that the target nodes are respectively the first $n$ and the following $m$ nodes. We denote the corresponding unknown latent variables by $x_i=z_i$ for $i\leq n$ and $y_j = z_{n+j}$ for $j\leq m$, such that the full set of latent variables is $\{x_1,\ldots, x_n, y_1, \ldots, y_m, z_{n+m+1}, \ldots, z_N\}$. In random graphs models, the limit case is obtained when the number of nodes $N$ grows to $\infty$, which may also be the case of $n$ and $m$ at certain rates that are made explicit in each of our results.

\section{Stability of Regularized OT}\label{sec:stability}

In this section, we derive generic results guaranteeing stability of $\Ww^C_\epsilon$ when modifying the cost matrix $C$. First observe that it is easy to show the following.

\begin{proposition}\label{prop:stability_infty}
  For all $\epsilon\geq 0$, we have
  \begin{equation}\label{eq:stability_infty}
    \abs{\Ww_\epsilon^C(\alpha, \beta) - \Ww_\epsilon^{\hat C}(\alpha, \beta)} \leq \norm{C - \hat C}_\infty
  \end{equation}
\end{proposition}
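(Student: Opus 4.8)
The plan is to reduce the claim to the elementary observation that, on the common feasible set $\Pi(\alpha,\beta)$, the two objective functions $P \mapsto \Ww_\epsilon^C(\alpha,\beta,P)$ and $P \mapsto \Ww_\epsilon^{\hat C}(\alpha,\beta,P)$ differ by a quantity uniformly bounded by $\norm{C-\hat C}_\infty$, and then to invoke the general principle that taking a minimum over a fixed domain is $1$-Lipschitz for the sup-distance between functions.

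First I would fix an arbitrary $P \in \Pi(\alpha,\beta)$. Since the term $\text{KL}\pa{P | \alpha \otimes \beta}$ does not involve the cost matrix, it cancels in the difference, leaving $\Ww_\epsilon^C(\alpha,\beta,P) - \Ww_\epsilon^{\hat C}(\alpha,\beta,P) = \ps{P}{C-\hat C}$. Because $\alpha\in\Delta^n$ and $\beta\in\Delta^m$, every $P\in\Pi(\alpha,\beta)$ has nonnegative entries with $\sum_{ij}P_{ij} = 1_n^\top P 1_m = 1_n^\top\alpha = 1$, hence $\abs{\ps{P}{C-\hat C}} \leq \norm{C-\hat C}_\infty \sum_{ij} P_{ij} = \norm{C-\hat C}_\infty$. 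Thus $\abs{\Ww_\epsilon^C(\alpha,\beta,P) - \Ww_\epsilon^{\hat C}(\alpha,\beta,P)} \leq \norm{C-\hat C}_\infty$ for every feasible $P$, with the bound independent of $P$. I would then conclude with the standard argument: if $f,g$ are real functions on a nonempty set $S$ with $\abs{f-g}\leq\delta$ pointwise, then $\abs{\inf_{S} f - \inf_{S} g}\leq\delta$ (pick $P$ nearly attaining $\inf_S f$, so $\inf_S g \leq g(P) \leq f(P)+\delta \leq \inf_S f + \delta + o(1)$, and symmetrize). Applying this with $S = \Pi(\alpha,\beta)$, $f = \Ww_\epsilon^C(\alpha,\beta,\cdot)$, $g = \Ww_\epsilon^{\hat C}(\alpha,\beta,\cdot)$ and $\delta = \norm{C-\hat C}_\infty$ gives \eqref{eq:stability_infty}; for $\epsilon>0$ the infima are the minima attained at the unique couplings $P^C,P^{\hat C}$, and for $\epsilon=0$ one may either keep infima or use compactness of $\Pi(\alpha,\beta)$ to get minima.

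There is essentially no real obstacle here — this is precisely why the paper flags the statement as easy. The only points deserving a word of care are that $\Pi(\alpha,\beta)$ is nonempty (it contains the product coupling $\alpha\otimes\beta$), so both sides of \eqref{eq:stability_infty} are finite and the comparison is meaningful even at $\epsilon=0$; and that the total mass of any transport plan is exactly $1$, which is exactly what converts the entrywise bound $\norm{C-\hat C}_\infty$ into a bound on $\ps{P}{C-\hat C}$ with no dimension-dependent factor. This crude $\norm{\cdot}_\infty$ estimate is the baseline that the subsequent Theorem~\ref{thm:stability} refines to weaker norms on $C-\hat C$.
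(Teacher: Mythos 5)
Your proof is correct and follows essentially the same route as the paper: both bound the pointwise difference of the objectives by $\sup_P|\ps{P}{C-\hat C}|\leq\norm{C-\hat C}_\infty$ (using that every feasible $P$ has total mass $1$), and then pass this uniform bound through the minimization over $\Pi(\alpha,\beta)$. You phrase the last step as the general $1$-Lipschitz property of the infimum, while the paper writes out the two-sided chain of inequalities explicitly, but these are the same argument.
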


\begin{proof}
  Denoting by $P^C$ any minimizer of $\Ww_\epsilon^C(\alpha, \beta, P)$, we have
  \begin{align*}
    &\Ww_{\epsilon}^{\hat C}(\alpha,\beta) = \Ww_\epsilon^{\hat C}(\alpha,\beta,P^{\hat C}) \\
    &\quad \leq \Ww_\epsilon^{\hat C}(\alpha,\beta,P^C) \leq \Ww^C_{\epsilon}(\alpha,\beta) \\
    &\qquad\quad+ \sup_{P \in \Pi(\alpha,\beta)}\abs{\Ww_\epsilon^{\hat C}(\alpha,\beta,P) - \Ww_\epsilon^{C}(\alpha,\beta,P)}
  \end{align*}
  and vice-versa, so
  \begin{align*}
    &\abs{\Ww_{\epsilon}^{\hat C}(\alpha,\beta) - \Ww^C_{\epsilon}(\alpha,\beta)} \\
    &\quad\leq \sup_{P\in \Pi(\alpha,\beta)}\abs{\Ww_\epsilon^{\hat C}(\alpha,\beta,P) - \Ww_\epsilon^{C}(\alpha,\beta,P)} \\
    &\quad= \sup_P\abs{\ps{C-\hat C}{P}}\leq \norm{C-\hat C}_\infty
  \end{align*}
  since for any $P \in \Pi(\alpha,\beta)$ we have $\sum_{ij} P_{ij} = 1$.
\end{proof}

Hence, for any level of regularization, the two OT distances are close to each other as soon as all the individual elements of $C-\hat C$ are. However, as we will see, in some situations convergence of $\norm{C-\hat C}_\infty$ will not hold, and one would rather prefer a bound involving more ``global'' norms such as the Mean Square Error $\frac{1}{nm}\norm{C-\hat C}_F^2$.
Nevertheless, when $\epsilon=0$, it is known that the minimizing OT plans $P_C$ are generally sparse. Due to this fact, the bound \eqref{eq:stability_infty} is generally the ``best'' that we can hope for. Fortunately, it is known that when $\epsilon$ is \emph{strictly positive}, the transport plan is \emph{not} sparse \cite{Peyre2019}. Moreover, as mentioned before, when $\epsilon>0$ the minimizing OT plan is unique and the cost function is strongly convex. Following this, the next theorem is our first main result.

\begin{theorem}\label{thm:stability}
  Define $c_{\max},c_{\min}$ such that $0\leq c_{\min} \leq C_{ij},\hat C_{ij} \leq c_{\max}$ for all $i,j$. For all $\epsilon>0$, it holds that:
  \begin{align}
    &\abs{\Ww_\epsilon^C(\alpha, \beta) - \Ww_\epsilon^{\hat C}(\alpha, \beta)} \notag \\
    &\quad \leq \epsilon e^{(2c_{\max}-c_{\min})/\epsilon} \norm{\alpha}\norm{\beta} \norm{e^{-C/\epsilon}-e^{-\hat C/\epsilon}} \label{eq:stability_spectral}
  \end{align}
  Moreover, denoting by $P^C$ and $P^{\hat C}$ the minimizers in \eqref{eq:primalOT} for $C$ and $\hat C$ respectively, we have
  \begin{align}
    &\KL(P^C|P^{\hat C})\leq \epsilon^{-1} e^{2(c_{\max}-c_{\min})/\epsilon}\norm{\alpha}\norm{\beta}\norm{C-\hat C}_F \notag\\
    &~ + e^{(4c_{\max} - 7c_{\min}/2)/\epsilon} \sqrt{\norm{\alpha}\norm{\beta} \norm{e^{-C/\epsilon}-e^{-\hat C/\epsilon}}} \label{eq:stability_TP}
  \end{align}
\end{theorem}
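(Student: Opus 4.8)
The plan is to prove \eqref{eq:stability_spectral} and \eqref{eq:stability_TP} in turn, both times leaning on the closed form $P^C_{ij}=u_i K_{ij}v_j$ of the entropic plan (write $K\eqdef e^{-C/\epsilon}$, $\hat K\eqdef e^{-\hat C/\epsilon}$, with $u,v\geq 0$ the Sinkhorn scalings, so that $u_i(Kv)_i=\alpha_i$ and $v_j(K^\top u)_j=\beta_j$) together with the two-sided bound $e^{-c_{\max}/\epsilon}\leq K_{ij},\hat K_{ij}\leq e^{-c_{\min}/\epsilon}$ coming from the hypothesis on $C,\hat C$.

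For \eqref{eq:stability_spectral}, I would start from the dual of \eqref{eq:primalOT}, $\Ww_\epsilon^C(\alpha,\beta)=\max_{f\in\RR^n,g\in\RR^m}\pa{\ps{f}{\alpha}+\ps{g}{\beta}-\epsilon\textstyle\sum_{ij}\alpha_i\beta_j e^{(f_i+g_j)/\epsilon-1}K_{ij}}$, whose maximizer is tied to the scalings by $u_i=\alpha_i e^{f_i/\epsilon-1/2}$, $v_j=\beta_j e^{g_j/\epsilon-1/2}$. The crucial remark is that, for fixed $\alpha,\beta$, every term inside the maximum is \emph{affine in the matrix $K$}; hence $K\mapsto\Ww_\epsilon^C(\alpha,\beta)$ is convex, with $-\epsilon\,uv^\top$ a subgradient at $K$. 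Writing the subgradient inequality once at $K$ and once at $\hat K$ sandwiches $\Ww_\epsilon^C(\alpha,\beta)-\Ww_\epsilon^{\hat C}(\alpha,\beta)$ between $\pm\epsilon$ times inner products of the rank-one matrices $u^{(\cdot)}(v^{(\cdot)})^\top$ with $K-\hat K$, so that, since $\abs{\ps{uv^\top}{M}}=\abs{u^\top M v}\leq\norm{u}\norm{v}\norm{M}$, one gets $\abs{\Ww_\epsilon^C(\alpha,\beta)-\Ww_\epsilon^{\hat C}(\alpha,\beta)}\leq\epsilon\max\pa{\norm{u^C}\norm{v^C},\norm{u^{\hat C}}\norm{v^{\hat C}}}\norm{K-\hat K}$.

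There remains to bound $\norm{u}\norm{v}$ by $\norm{\alpha}\norm{\beta}$ without picking up any dimension factor, which I expect to be the main obstacle. Interestingly, no appeal to the rescaling freedom $(u,v)\mapsto(\lambda u,v/\lambda)$ is needed: from $u_i=\alpha_i/(Kv)_i$ and $(Kv)_i\geq e^{-c_{\max}/\epsilon}\norm{v}_1$ one gets $\norm{u}\leq\norm{\alpha}e^{c_{\max}/\epsilon}/\norm{v}_1$, and symmetrically $\norm{v}\leq\norm{\beta}e^{c_{\max}/\epsilon}/\norm{u}_1$; while summing $v_j=\beta_j/(K^\top u)_j\geq\beta_j e^{c_{\min}/\epsilon}/\norm{u}_1$ over $j$ gives $\norm{u}_1\norm{v}_1\geq e^{c_{\min}/\epsilon}$. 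Multiplying the three yields $\norm{u}\norm{v}\leq\norm{\alpha}\norm{\beta}e^{(2c_{\max}-c_{\min})/\epsilon}$, valid for $K$ and $\hat K$ alike, which gives \eqref{eq:stability_spectral}.

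For \eqref{eq:stability_TP} the plan is to use the Pythagoras-type identity of entropic OT: substituting $P^{\hat C}_{ij}=\hat u_i\hat K_{ij}\hat v_j$ into $\Ww_\epsilon^{\hat C}(\alpha,\beta,P)$ and using that $P\in\Pi(\alpha,\beta)$ collapses the $\hat u,\hat v$-terms to a constant, one obtains $\Ww_\epsilon^{\hat C}(\alpha,\beta,P)=\Ww_\epsilon^{\hat C}(\alpha,\beta)+\epsilon\KL(P|P^{\hat C})$ for all $P\in\Pi(\alpha,\beta)$, the additive constant being identified at $P=P^{\hat C}$. Taking $P=P^C$ and using $\Ww_\epsilon^{\hat C}(\alpha,\beta,P^C)=\Ww_\epsilon^C(\alpha,\beta)+\ps{P^C}{\hat C-C}$ (optimality of $P^C$ for $C$) gives $\epsilon\KL(P^C|P^{\hat C})=\pa{\Ww_\epsilon^C(\alpha,\beta)-\Ww_\epsilon^{\hat C}(\alpha,\beta)}+\ps{P^C}{\hat C-C}$. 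The first summand is controlled by \eqref{eq:stability_spectral} just proved; the second by Cauchy--Schwarz, $\abs{\ps{P^C}{\hat C-C}}\leq\norm{P^C}_F\norm{C-\hat C}_F$ with $\norm{P^C}_F\leq e^{-c_{\min}/\epsilon}\norm{u^C}\norm{v^C}\leq\norm{\alpha}\norm{\beta}e^{2(c_{\max}-c_{\min})/\epsilon}$ from the previous paragraph. Dividing by $\epsilon$ then produces a bound of exactly the shape of \eqref{eq:stability_TP}; matching its precise exponents, and the $\sqrt{\cdot}$ in the second summand (the useful relaxation when the perturbation is not small, via $t\leq\sqrt t$ for $t\leq 1$), is routine bookkeeping.
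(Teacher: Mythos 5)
Your proof is correct and takes a genuinely different route from the paper's, arriving at a bound that is at least as strong where it matters.

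For \eqref{eq:stability_spectral}, the paper's proof bounds the optimal dual potentials in $\norm{\cdot}_\infty$ (Lemma~\ref{lem:dualbound}), introduces the box-constrained problem \eqref{eq:dualOTbounded}, and then bounds the sup-gap between $\Ll_\epsilon^K$ and $\Ll_\epsilon^{\hat K}$ over the box. You instead observe that the dual value is a pointwise maximum of functions affine in $K$, hence convex in $K$, and use the envelope/subgradient inequality to sandwich the difference by $\epsilon\,\ps{u v^\top}{K-\hat K}$ evaluated at the two optima. This replaces the box-constraint machinery by a direct estimate of $\norm{u}\norm{v}$ via the Sinkhorn marginal equations, and the chain $\norm{u}\leq\norm{\alpha}e^{c_{\max}/\epsilon}/\norm{v}_1$, $\norm{v}\leq\norm{\beta}e^{c_{\max}/\epsilon}/\norm{u}_1$, $\norm{u}_1\norm{v}_1\geq e^{c_{\min}/\epsilon}$ reproduces the constant $e^{(2c_{\max}-c_{\min})/\epsilon}$ exactly. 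This is tidier and avoids introducing $\eta$.

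For \eqref{eq:stability_TP}, the paper uses the strong-convexity estimate of Lemma~\ref{lem:stability_TP} on the dual plus two Cauchy--Schwarz steps, which is what introduces the square root. You use the Bregman/Pythagoras identity $\Ww_\epsilon^{\hat C}(\alpha,\beta,P)=\Ww_\epsilon^{\hat C}(\alpha,\beta)+\epsilon\KL(P|P^{\hat C})$ applied at $P=P^C$, which gives the exact decomposition $\epsilon\KL(P^C|P^{\hat C})=\big(\Ww_\epsilon^C-\Ww_\epsilon^{\hat C}\big)+\ps{P^C}{\hat C-C}$. The second term matches the paper's first summand exactly; the first is controlled linearly by the spectral bound. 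The resulting estimate
\begin{equation*}
  \KL(P^C|P^{\hat C})\leq \epsilon^{-1}e^{2(c_{\max}-c_{\min})/\epsilon}\norm{\alpha}\norm{\beta}\norm{C-\hat C}_F + e^{(2c_{\max}-c_{\min})/\epsilon}\norm{\alpha}\norm{\beta}\norm{e^{-C/\epsilon}-e^{-\hat C/\epsilon}}
\end{equation*}
is in fact tighter than \eqref{eq:stability_TP} in the small-perturbation regime (linear rather than square-root in the spectral gap). Your remark that passing to the paper's exact form is ``routine bookkeeping'' via $t\leq\sqrt t$ is the one loose point: $t=\norm{\alpha}\norm{\beta}\norm{e^{-C/\epsilon}-e^{-\hat C/\epsilon}}\leq 1$ is not automatic for arbitrary $\alpha,\beta$ and large $n,m$, so your bound does not literally dominate the paper's in all regimes. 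But since it is a valid and, where it matters, stronger stability estimate, this is a difference in the shape of the result rather than a gap in the argument.
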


As expected, all the bounds above are insensitive to shifting the costs $C \to C + a$ and $\hat C\to \hat C + a$ since this shifts $W^C_\epsilon$ by the same quantity and leaves the minimizing OT plan unchanged. 
Note that \eqref{eq:stability_spectral} uses the spectral norm between the $K$, while \eqref{eq:stability_TP} includes the Frobenius between the $C$. The latter is actually strictly worse: by virtue of the mean value theorem and $\norm{\cdot}\leq \norm{\cdot}_F$, the following proposition is immediate.
\begin{proposition}\label{prop:spec2F}
  Under the assumption of Thm.~\ref{thm:stability},
  \begin{equation}
    \norm{e^{-C/\epsilon}-e^{-\hat C/\epsilon}} \leq \epsilon^{-1} e^{-c_{\min}/\epsilon} \norm{C-\hat C}_F
  \end{equation}
\end{proposition}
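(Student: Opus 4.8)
The plan is to reduce the statement to an entrywise Lipschitz estimate on the scalar map $t\mapsto e^{-t/\epsilon}$ and then pass from the Frobenius norm to the operator norm. First I would observe that on the interval $[c_{\min},+\infty)$ the function $\phi(t) = e^{-t/\epsilon}$ has derivative $\phi'(t) = -\epsilon^{-1}e^{-t/\epsilon}$, whose absolute value is decreasing in $t$ and hence maximized at the left endpoint, so that $\sup_{t\geq c_{\min}}\abs{\phi'(t)} = \epsilon^{-1}e^{-c_{\min}/\epsilon}$. Consequently $\phi$ is $\epsilon^{-1}e^{-c_{\min}/\epsilon}$-Lipschitz on $[c_{\min},+\infty)$.

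Next, since by the assumption of Thm.~\ref{thm:stability} we have $c_{\min}\leq C_{ij}$ and $c_{\min}\leq \hat C_{ij}$ for all $i,j$, both $C_{ij}$ and $\hat C_{ij}$ lie in the range where $\phi$ is Lipschitz, so the mean value theorem yields, for every pair $(i,j)$,
\[
  \abs{e^{-C_{ij}/\epsilon} - e^{-\hat C_{ij}/\epsilon}} \leq \epsilon^{-1}e^{-c_{\min}/\epsilon}\,\abs{C_{ij}-\hat C_{ij}}.
\]
Squaring and summing over all $i,j$ gives $\norm{e^{-C/\epsilon}-e^{-\hat C/\epsilon}}_F \leq \epsilon^{-1}e^{-c_{\min}/\epsilon}\norm{C-\hat C}_F$, and the claim follows by applying $\norm{M}\leq\norm{M}_F$ to $M = e^{-C/\epsilon}-e^{-\hat C/\epsilon}$.

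I do not expect any genuine obstacle here; the statement is, as the text says, immediate. The only point requiring a moment of care is that the Lipschitz constant must be taken uniformly over the entire range in which the entries of both $C$ and $\hat C$ may lie, namely $[c_{\min},+\infty)$ — which is precisely where the hypothesis $c_{\min}\leq C_{ij},\hat C_{ij}$ enters, and it also explains why only $c_{\min}$, and not $c_{\max}$, appears in the bound (the exponential is steepest near the small end of the range).
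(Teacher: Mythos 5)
Your argument is correct and is precisely the one the paper intends: the entrywise mean value theorem gives the Lipschitz constant $\epsilon^{-1}e^{-c_{\min}/\epsilon}$, and the final step is the standard inequality $\norm{\cdot}\leq\norm{\cdot}_F$. Nothing to add.
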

Most often the terms $\alpha_i,\beta_j$ are ``balanced'', i.e. of the order of $1/n$ and $1/m$, and we look for estimators $\hat C$ such that $\frac{1}{\sqrt{nm}} \norm{e^{-C/\epsilon}-e^{-\hat C/\epsilon}}$ or the MSE $\frac{1}{\sqrt{nm}} \norm{C-\hat C}_F$ converges, the second being stronger than the first. Moreover, in that case we also have $\frac{1}{\sqrt{nm}} \norm{C-\hat C}_F \leq \norm{C-\hat C}_\infty$, confirming that the entropic bounds are (up to potentially large multiplicative constants) better than the unregularized one \eqref{eq:stability_infty}.
In the next sections, we give three examples that use various versions of our bounds.

\begin{proof}[Proof of Theorem \ref{thm:stability}]
  We work with the dual of \eqref{eq:primalOT}. For \emph{any} matrix $K \in \RR_+^{n \times m}$, we define:
  \begin{equation}
    \Ll_\epsilon^K(\alpha,\beta) \eqdef \max_{f \in \RR^n, g \in \RR^m} \Ll^K_\epsilon(f,g,\alpha,\beta) \label{eq:dualOT}\tag{$\mathcal{D}_\epsilon$}
  \end{equation}
  where $\Ll^K_\epsilon(f,g,\alpha,\beta)\eqdef\alpha^\top f + \beta^\top g - \epsilon (e^{f/\epsilon} \odot \alpha)^\top K (e^{g/\epsilon} \odot \beta) + \epsilon$. When $K = e^{-C/\epsilon}$, we have $\Ll_\epsilon^K(\alpha,\beta) = \Ww_\epsilon^C(\alpha, \beta)$, and in this case the optimal dual potentials $(f,g)$ relates to the optimal $(u,v)$ by $u = \alpha \odot e^{f/\epsilon}$ and $v = \beta \odot e^{g/\epsilon}$ \cite{Peyre2019}.
  The following lemma, proved in the appendix, is the key to proving Theorem \ref{thm:stability}. It shows that the properties of the matrix $K$ allow to bound the optimal dual potentials.

  \begin{lemma}\label{lem:dualbound}
    Assume that $K$ is such that $0< \delta_{\min} \leq K_{ij} \leq \delta_{\max} \leq  1$ for all $i,j$. Then there are optimal potentials of \eqref{eq:dualOT} that satisfy $\norm{f}_\infty, \norm{g}_\infty \leq \epsilon \log(\sqrt{\delta_{\max}}/\delta_{\min})$.
  \end{lemma}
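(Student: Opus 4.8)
The plan is to exploit the first-order optimality conditions of the dual \eqref{eq:dualOT} (the Sinkhorn fixed-point relations) together with the one-dimensional translation invariance of $\Ll^K_\epsilon$, and then to pick the translation representative carefully. First, since $0<\delta_{\min}\le K_{ij}\le\delta_{\max}\le 1$, I set $C\eqdef-\epsilon\log K$ entrywise; this $C$ is finite with $0\le C_{ij}$, and $K=e^{-C/\epsilon}$, so the structure recalled just before the lemma applies: there are optimal potentials $(f,g)$ of \eqref{eq:dualOT}, and with $u\eqdef\alpha\odot e^{f/\epsilon}$ and $v\eqdef\beta\odot e^{g/\epsilon}$ the optimal coupling is $P^C_{ij}=u_iK_{ij}v_j$, whose marginal constraints read $u_i(Kv)_i=\alpha_i$ and $v_j(K^\top u)_j=\beta_j$. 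Any index with $\alpha_i=0$ (resp. $\beta_j=0$) leaves $f_i$ (resp. $g_j$) free, so I set it to $0$, which already obeys the claimed bound since $\log(\sqrt{\delta_{\max}}/\delta_{\min})\ge0$; hence I may assume $\alpha,\beta>0$ entrywise.

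Second, I bound the Sinkhorn scalings. Write $U\eqdef\sum_iu_i$ and $V\eqdef\sum_jv_j$. From $\delta_{\min}V\le(Kv)_i\le\delta_{\max}V$ (valid since $v>0$ and $\delta_{\min}\le K_{ij}\le\delta_{\max}$) and $u_i=\alpha_i/(Kv)_i$, we get $\alpha_i/(\delta_{\max}V)\le u_i\le\alpha_i/(\delta_{\min}V)$; summing over $i$ gives $1/(\delta_{\max}V)\le U\le1/(\delta_{\min}V)$, and symmetrically $1/(\delta_{\max}U)\le V\le1/(\delta_{\min}U)$, so $1/\delta_{\max}\le UV\le1/\delta_{\min}$. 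Since $\Ll^K_\epsilon(f,g,\alpha,\beta)$ is invariant under $(f,g)\mapsto(f+\epsilon\log\lambda,\,g-\epsilon\log\lambda)$ for any $\lambda>0$ — both $\alpha^\top f+\beta^\top g$ and the bilinear term are unchanged because $\alpha,\beta$ are probability vectors — and this rescales $(U,V)$ to $(\lambda U,V/\lambda)$, I take $\lambda=\sqrt{V/U}$ to obtain, without loss of optimality, $U=V=\sqrt{UV}=:T$ with $\delta_{\max}^{-1/2}\le T\le\delta_{\min}^{-1/2}$.

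Finally I read off the bound. With this representative $f_i=\epsilon\log(u_i/\alpha_i)$ and $\alpha_i/(\delta_{\max}T)\le u_i\le\alpha_i/(\delta_{\min}T)$, so $\log(1/\delta_{\max})-\log T\le f_i/\epsilon\le\log(1/\delta_{\min})-\log T$; plugging in $\tfrac12\log(1/\delta_{\max})\le\log T\le\tfrac12\log(1/\delta_{\min})$ and using $\delta_{\max}\le1$, the right-hand side is $\le\log(1/\delta_{\min})-\tfrac12\log(1/\delta_{\max})=\log(\sqrt{\delta_{\max}}/\delta_{\min})$ while the left-hand side is $\ge\log(1/\delta_{\max})-\tfrac12\log(1/\delta_{\min})\ge-\log(\sqrt{\delta_{\max}}/\delta_{\min})$, giving $\norm{f}_\infty\le\epsilon\log(\sqrt{\delta_{\max}}/\delta_{\min})$; the identical computation for $g_j=\epsilon\log(v_j/\beta_j)$ with the same $T$ bounds $\norm{g}_\infty$.

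The point that needs care is the choice of translation: a cheap normalization such as $\sum_if_i=0$ controls only the oscillation $\max_if_i-\min_if_i\le\epsilon\log(\delta_{\max}/\delta_{\min})$, not $\norm{f}_\infty$ itself; it is the balanced normalization $U=V$, combined with the product bound $UV\in[1/\delta_{\max},1/\delta_{\min}]$, that pins down $T$ tightly enough to yield the asymmetric constant $\sqrt{\delta_{\max}}/\delta_{\min}$. Everything else is standard entropic-OT duality bookkeeping — existence of optimal $(f,g)$ and the product form of $P^C$ being the facts already invoked in the excerpt.
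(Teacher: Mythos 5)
Your proof is correct, and it takes a genuinely different route from the paper's. The paper first reduces to the case $\delta_{\max}=1$, then uses the identity $(e^{f^*/\epsilon}\odot\alpha)^\top K(e^{g^*/\epsilon}\odot\beta)=1$ from the first-order conditions to write $\Ll(f^*,g^*)=\alpha^\top f^*+\beta^\top g^*$, normalizes so that $\alpha^\top f^*=\beta^\top g^*=\tfrac12\Ll(f^*,g^*)\geq 0$, and then applies Jensen's inequality to $-\epsilon\log\sum_j e^{g^*_j/\epsilon}\beta_j K_{ij}$ to get the upper bound on $f^*_i$, with the lower bound coming from $K_{ij}\le 1$; the general case is then recovered by a shift. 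You instead work directly with the Sinkhorn scalings $u,v$, establish the two-sided marginal bounds $\alpha_i/(\delta_{\max}V)\le u_i\le\alpha_i/(\delta_{\min}V)$ and the resulting product bound $1/\delta_{\max}\le UV\le 1/\delta_{\min}$, and fix the translation by balancing $U=V$. Your argument is more elementary — no Jensen, no use of $\Ll(f^*,g^*)\geq 0$, and no separate $\delta_{\max}=1$ case — and it makes the role of $\delta_{\max}\le 1$ transparent (it enters only through $\delta_{\max}\delta_{\min}\le 1$ in the lower bound). One small cosmetic slip: the inequality $\log(1/\delta_{\min})-\tfrac12\log(1/\delta_{\max})=\log(\sqrt{\delta_{\max}}/\delta_{\min})$ is an identity, so the phrase "using $\delta_{\max}\le 1$" is only needed for the lower-bound side, where it amounts to $\tfrac12\log(1/\delta_{\max})+\tfrac12\log(1/\delta_{\min})\ge 0$. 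You also handle degenerate indices ($\alpha_i=0$ or $\beta_j=0$) explicitly, which the paper leaves implicit. Your closing remark — that the naive zero-mean normalization controls only the oscillation of $f$, not $\norm{f}_\infty$, and that the balanced choice $U=V$ combined with the $UV$ bound is what produces the asymmetric constant $\sqrt{\delta_{\max}}/\delta_{\min}$ — is exactly the right observation and parallels the role of the paper's normalization $\alpha^\top f^*=\beta^\top g^*$.
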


  We denote by $f^K, g^K$ optimal solutions of \eqref{eq:dualOT} that satisfy these bounds.
  In light of Lemma \ref{lem:dualbound}, for a constant $\eta>0$, we define the following optimization problem, which is just \eqref{eq:dualOT} with added box constraints:
  \begin{equation}\label{eq:dualOTbounded}
    \Ll^K_{\epsilon,\eta}(\alpha,\beta) \eqdef \max_{f,g\in \Bb_{\epsilon \log\eta}^{\norm{\cdot}_\infty}} \Ll^K_\epsilon(f,g,\alpha,\beta) \tag{$\mathcal{D}_{\epsilon,\eta}$}
  \end{equation}
  While $\Ll^K_{\epsilon,\eta}$ is only defined here for the purpose of the proof, in Section \ref{sec:fast} we will illustrate a case where we actually need to solve this problem. Solving the dual \eqref{eq:dualOT} is usually done by block coordinate-ascent, which is nothing more than Sinkhorn's algorithm for the primal \eqref{eq:primalOT}, but in the log-domain \cite{Peyre2019}. Solving \eqref{eq:dualOTbounded} can be done by simply adding a projection step on the constraints $\norm{f}_\infty, \norm{g}_\infty \leq \epsilon\log(\eta)$.

  Let $\hat K$ be a perturbed version of $K$.
  By a reasoning similar to the proof of Prop.~\ref{prop:stability_infty}, we have
  \begin{align}
    &\abs{\Ll_{\epsilon,\eta}^{\hat K}(\alpha,\beta) - \Ll_{\epsilon,\eta}^{K}(\alpha,\beta)}\notag \\
    &\quad \leq \sup_{f,g\in \Bb_{\epsilon \log \eta}^{\norm{\cdot}_\infty}} \abs{\Ll_\epsilon^K(f,g,\alpha,\beta) - \Ll_\epsilon^{\hat K}(f,g,\alpha,\beta)} \notag\\
    &\quad \leq \epsilon\eta^{2}\norm{\alpha}\norm{\beta}\norm{K-\hat K}\, .\label{eq:spectral_bound_on_K}
  \end{align}
  We obtain \eqref{eq:stability_spectral} by taking $K = e^{-C/\epsilon}$, $\hat K = e^{-\hat C/\epsilon}$ and $\eta = e^{(c_{\max}-c_{\min}/2)/\epsilon}$, such that by Lemma \ref{lem:dualbound} $\Ll^K_{\epsilon,\eta}(\alpha,\beta) = W_\epsilon^C(\alpha, \beta)$ and $\Ll^{\hat K}_{\epsilon,\eta}(\alpha,\beta) = W_\epsilon^{\hat C}(\alpha, \beta)$.

  Next we prove the stability of the transport plan. We know that optimal transport plans are insensitive to shifting the cost (see e.g. \eqref{eq:primalOT}, where replacing $C$ by $C+a$ does not change the minimization problem), so for the rest of the proof we assume without lost of generality that $C, \hat C$ are both shifted by $-c_{\min}$ such that $0 \leq C_{ij}, \hat C_{ij} \leq \bar c \eqdef c_{\max} - c_{\min}$. Strong convexity leads to the following result, proved in App.~\ref{app:stability_TP}.

  \begin{lemma}\label{lem:stability_TP}
    Under the conditions above, for any $f,g$ satisfying $\norm{f}_\infty, \norm{g}_\infty \leq \bar c$,
    \begin{align}
      &\sum\nolimits_{ij} K_{ij} \alpha_i \beta_j \abs{f_i + g_j - (f^K_i + g^K_j)}^2 \notag \\
      &\qquad\leq (\epsilon/2) e^{2\bar c/\epsilon}\pa{\Ll(f^K, g^K) - \Ll(f,g)}
    \end{align}
  \end{lemma}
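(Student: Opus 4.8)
The plan is to exploit that, after using $1^\top\alpha = 1^\top\beta = 1$, the dual objective $\Ll^K_\epsilon$ splits into a sum of one‑dimensional strictly concave functions of the sums $f_i + g_j$, to which a strong‑concavity estimate around the optimum applies. Concretely, for $K = e^{-C/\epsilon}$ (recall $C$ has been shifted so that $0 \le C_{ij} \le \bar c$), I would first rewrite
\[
  \Ll^K_\epsilon(f,g,\alpha,\beta) = \epsilon + \sum\nolimits_{ij}\alpha_i\beta_j\,\psi_{ij}(f_i+g_j),\qquad \psi_{ij}(s)\eqdef s - \epsilon K_{ij}e^{s/\epsilon},
\]
using $\sum_i\alpha_i f_i = \sum_{ij}\alpha_i\beta_j f_i$ and symmetrically for $g$. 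Each $\psi_{ij}$ is smooth and strictly concave, with $\psi_{ij}''(s) = -\epsilon^{-1}K_{ij}e^{s/\epsilon}$.

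Second, I would record two facts about the optimizer $(f^K,g^K)$ of \eqref{eq:dualOT}. Since $0\le C_{ij}\le\bar c$ gives $e^{-\bar c/\epsilon}\le K_{ij}\le 1$, Lemma \ref{lem:dualbound} (applied with $\delta_{\min}=e^{-\bar c/\epsilon}$, $\delta_{\max}=1$) provides optimal potentials with $\norm{f^K}_\infty,\norm{g^K}_\infty\le\bar c$; together with the hypothesis $\norm{f}_\infty,\norm{g}_\infty\le\bar c$, this places $s^K_{ij}\eqdef f^K_i+g^K_j$ and $s_{ij}\eqdef f_i+g_j$ — and the whole segment between them — inside $[-2\bar c,2\bar c]$. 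Moreover, since $(f^K,g^K)$ maximizes the concave differentiable map $\Ll^K_\epsilon(\cdot,\cdot,\alpha,\beta)$ it is a critical point, i.e.\ $\sum_j\beta_j K_{ij}e^{s^K_{ij}/\epsilon}=1$ for every $i$ and $\sum_i\alpha_i K_{ij}e^{s^K_{ij}/\epsilon}=1$ for every $j$; these are exactly the Sinkhorn marginal equations.

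Third, for each $(i,j)$ I would apply the strong‑concavity inequality to $\psi_{ij}$ along the segment joining $s^K_{ij}$ to $s_{ij}$, on which $-\psi_{ij}'' \ge \epsilon^{-1}K_{ij}e^{-2\bar c/\epsilon}$:
\[
  \psi_{ij}(s^K_{ij}) - \psi_{ij}(s_{ij}) \ \ge\ \psi_{ij}'(s^K_{ij})\,(s^K_{ij}-s_{ij}) + \tfrac12\,\epsilon^{-1}K_{ij}e^{-2\bar c/\epsilon}\,(s_{ij}-s^K_{ij})^2 .
\]
Weighting by $\alpha_i\beta_j$ and summing, the left‑hand side becomes $\Ll(f^K,g^K)-\Ll(f,g)$, while the linear term equals $\langle\nabla\Ll^K_\epsilon(f^K,g^K),\,(f^K-f,\,g^K-g)\rangle$ and vanishes by criticality — concretely, $\psi_{ij}'(s^K_{ij})=1-K_{ij}e^{s^K_{ij}/\epsilon}$, and after summing against $\alpha_i\beta_j$ the marginal equations make the $i$‑sums and $j$‑sums cancel separately. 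This leaves $\Ll(f^K,g^K)-\Ll(f,g)\ge \tfrac12\epsilon^{-1}e^{-2\bar c/\epsilon}\sum_{ij}K_{ij}\alpha_i\beta_j\abs{f_i+g_j-(f^K_i+g^K_j)}^2$, which rearranges to the claimed inequality (up to tracking the exact numerical constant in the crude lower bound on $-\psi_{ij}''$).

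I expect the only genuinely delicate point to be the vanishing of the linear term: one must know that the bounded maximizers supplied by Lemma \ref{lem:dualbound} really are critical points of the \emph{unconstrained} objective $\Ll^K_\epsilon$, so that the Sinkhorn fixed‑point equations hold exactly. For $K=e^{-C/\epsilon}$ with bounded $C$ this is standard (the entropic dual attains its maximum and every maximizer solves the fixed‑point equations); alternatively, one can avoid differentiability entirely by replacing the gradient argument with a three‑point convexity inequality comparing $\psi_{ij}$ at $s_{ij}$, $s^K_{ij}$, and their convex combinations. The remaining work — the algebraic rearrangement and keeping the exponential factor $e^{2\bar c/\epsilon}$ under control via $\norm{f}_\infty,\norm{f^K}_\infty\le\bar c$ — is routine.
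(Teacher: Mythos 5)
Your argument is essentially the paper's: the paper applies the midpoint strong-convexity inequality for $x\mapsto e^{x/\epsilon}$ along the segment $(tf+(1-t)f^K,\,tg+(1-t)g^K)$ and lets $t\to 0$ to invoke $\nabla\Ll(f^K,g^K)=0$, which is exactly the Taylor-form strong-concavity bound around the optimizer that you write down directly after splitting $\Ll$ into the one-dimensional pieces $\alpha_i\beta_j\,\psi_{ij}(f_i+g_j)$. Your concern about whether the bounded maximizers from Lemma~\ref{lem:dualbound} satisfy the unconstrained first-order conditions is not a gap — the paper takes this for granted for the same reason you give — and note that both your derivation and the paper's give the constant $2\epsilon e^{2\bar c/\epsilon}$ rather than the $(\epsilon/2)e^{2\bar c/\epsilon}$ appearing in the lemma's statement, which looks like a typo in the latter rather than a flaw in either argument.
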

  Using this lemma, by maximality of $\Ll^{\hat K}(f^{\hat K},g^{\hat K})$, using similar computation as in the proof of Prop.~\ref{prop:stability_infty}, once again:
  \begin{align*}
    &\sum\nolimits_{ij} K_{ij} \alpha_i \beta_j \abs{f^{\hat K}_i + g^{\hat K}_j - (f^K_i + g^K_j)}^2 \\
    &\leq (\epsilon/2) e^{2\bar c/\epsilon}\pa{\Ll^K(f^K, g^K) - \Ll^K(f^{\hat K},g^{\hat K})} \\
    &\leq \epsilon e^{2\bar c/\epsilon} \sup_{f,g} \abs{\Ll^K(f,g) - \Ll^{\hat K}(f,g)}\\
    &\leq \epsilon^2 e^{4\bar c/\epsilon} \norm{\alpha}\norm{\beta} \norm{e^{-C/\epsilon}-e^{-\hat C/\epsilon}}
  \end{align*}
  since the supremum is on $\norm{f}_\infty, \norm{g}_\infty \leq \bar c$.

  Now, for the optimal transport plans $P^C = (e^{f^K/\epsilon}\odot \alpha)e^{-C/\epsilon}(e^{g^K/\epsilon}\odot \beta)^\top$ and similarly $P^{\hat C}$, we bound
  \begin{align*}
    &\KL(P^C | P^{\hat C}) = \sum\nolimits_{ij} P^C_{ij} \log(P^C_{ij}/P^{\hat C}_{ij}) \\
    &= \sum\nolimits_{ij} e^{\frac{f^K_i + g^K_j - C_{ij}}{\epsilon}}\alpha_i \beta_j \\
    &\qquad \epsilon^{-1}\pa{f^K_i + g^K_j - f^{\hat K}_i - g^{\hat K}_j + C_{ij} - \hat C_{ij}}
  \end{align*}

  Applying twice Cauchy-Schwartz inequality,
  \begin{align*}
    &\KL(P^C | P^{\hat C}) \leq \epsilon^{-1}e^{\frac{2\bar c}{\epsilon}}\norm{\alpha}\norm{\beta}\norm{C-\hat C}_F\\
    &\quad + \epsilon^{-1} e^{2\bar c/\epsilon}\sqrt{\sum e^{-C_{ij}/\epsilon}\alpha_i \beta_j} \\
    &\qquad \sqrt{\sum e^{-C_{ij}/\epsilon}\alpha_i \beta_j \abs{f^{\hat K}_i + g^{\hat K}_j - f^K_i - g^K_j}^2} \\
    &\leq \epsilon^{-1} e^{\frac{2\bar c}{\epsilon}}\norm{\alpha}\norm{\beta}\norm{C-\hat C}_F\\
    &\quad + e^{4\bar c/\epsilon} \sqrt{\norm{\alpha}\norm{\beta} \norm{e^{-C/\epsilon}-e^{-\hat C/\epsilon}}}
  \end{align*}
  For the generic case where $C, \hat C$ are in $[c_{\min}, c_{\max}]$, we apply the bound above to their shifted version, which as we recall does not change the transport plan and conclude the proof.
\end{proof}

\section{Local kernels}\label{sec:local}

\begin{figure}[h]
  \centering
  \includegraphics[width=.15\textwidth]{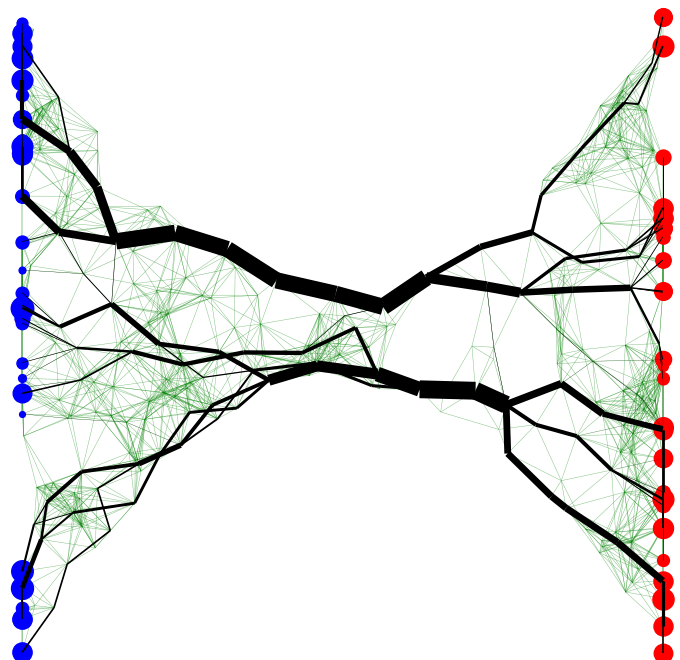}
  \includegraphics[width=.15\textwidth]{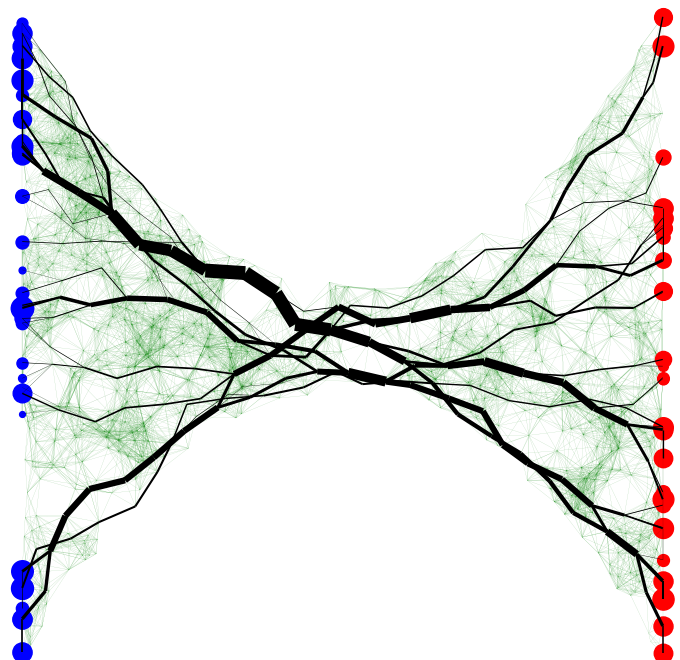}
  \includegraphics[width=.15\textwidth]{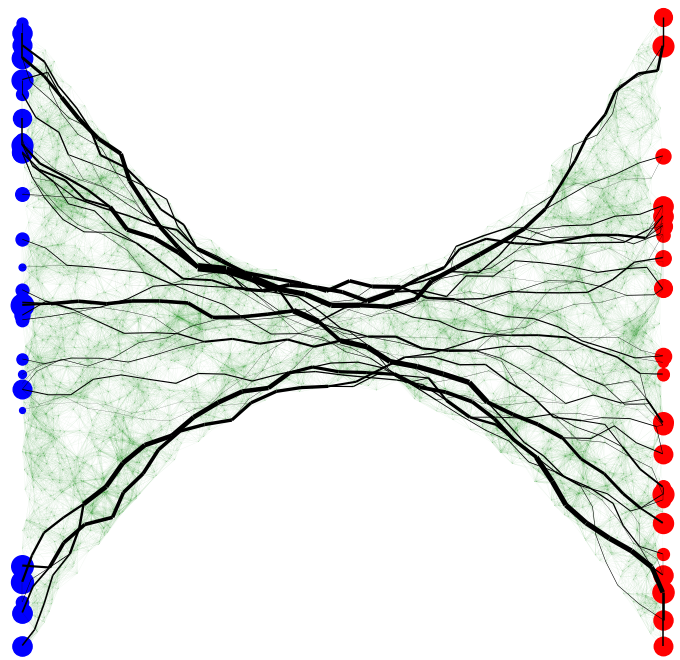}
  \caption{\small Optimal transport plan displayed along the shortest paths on $\epsilon$-graphs on a 2D compact domain, for increasing $N=300, 1000, 3000$.}
  \vspace{-10pt}
  \label{fig:local_TP}
\end{figure}

In this section, we consider ``local'' kernels with vanishing connectivity radius. Recall that the latent variables of the graph are divided into three groups: $\{x_1,\ldots x_n, y_1,\ldots, y_m, z_{n+m+1},\ldots, z_N\}$, and that the distributions $\alpha, \beta$ are respectively supported over the nodes with latent variables $x_i$ and $y_j$. We let $N\gg n+m$ go to $\infty$ and aim to use the auxiliary nodes $z_\ell$ to estimate some notion of cost between the target nodes $x_i,y_j$ (see Fig.~\ref{fig:local_TP}).
We look at classic ``$\epsilon$-graphs'', where two nodes are connected if their latent variables are closer than a threshold $h_N$:
\begin{equation*}
  w_N(z,z') = 1_{\norm{z-z'}\leq h_N}
\end{equation*}
Intuitively, in an graph with sufficient density of points and small radius $h_N$, the \emph{shortest path} between two points converges to a \emph{geodesic path}, that is, the limit continuous shortest path (see Fig.~\ref{fig:local_TP}). There are many settings in which this is true \cite{Bernstein2000, Alamgir2012, Hwang2016, Davis2019}, for various notions of geodesics.
More precisely, we assume that the latent variables belong to $\Mm$, a $k$-dimensional compact smooth submanifold $\Mm \subset \RR^d$ without boundary for simplicity, equipped with the Riemannian geometry induced by the Euclidean inner product in $\RR^d$. We denote by $d(z,z') \geq \norm{z-z'}$ the geodesic distance on $\Mm$ and $\norm{\cdot}$ the Euclidean norm in $\RR^d$. The diameter of $\Mm$ is $D_\Xx = \max_{z,z'\in\Mm} d(z,z')$.
The uniform measure on $\Mm$ is denoted by $\mu$. We assume that the auxiliary nodes $z_{n+m+1}, \ldots, z_N$ are distributed i.i.d. according to some measure $\nu$ on $\Mm$, which we assume to have a density $p$ wrt $\mu$, lower bounded by $p(z)\geq c_z$. On the contrary, we do not make any assumptions on the points $x_i,y_j$. We let $N\to \infty$ and $h_N\to 0$, and potentially $n+m = o(N)$ grow as well, such that
\begin{equation}\label{eq:rate_epsilon}
  \frac{N h_N^k}{\log(nm/h_N)} \to \infty
\end{equation}

We consider an OT cost that is a function of the geodesic distance $c(x,y)=f(d(x,y))$, where $f:[0,D_\Xx] \to [c_{\min}, c_{\max}]$ is $c_f$-Lipschitz. It is known that various estimators converge to $d(x,y)$, for instance the weighted shortest path $\min \sum_\ell \norm{z_{i_{\ell+1}} - z_{i_\ell}}$, where the minimization is over all paths $z_{i_0} = x, z_{i_1}, \ldots, z_{i_L} = y$, is the basis of the classic ISOMAP algorithm \cite{Bernstein2000}, and converges to $d(x,y)$. Various other procedures leads to different geodesic metrics \cite{Sajama2005, Alamgir2012, Davis2019}. Such estimators can directly lead to bounds on the largest deviation of the cost matrix $\norm{C-\hat C}_\infty$, which combined with Prop.~\ref{prop:stability_infty} is sufficient to obtain stability bounds on the OT distance for any $\epsilon$. Stability of the transport plan is obtained with Thm.~\ref{thm:stability} and valid only for $\epsilon>0$.

In the spirit of this paper however, here we consider that the latent variables are \emph{unknown} as well as their pairwise distance, but only the radius $h_N$ is known. The ISOMAP estimator therefore cannot be computed, and instead the shortest path estimator is taken as 
\begin{equation}\label{eq:shortest_path}
  \hat d_{ij} = h_N^{-1} \text{SP}(i,j)\, ,
\end{equation}
where $\text{SP}(i,j)$ is the length of the shortest path of unweighted edges (that is, simply the number of edges) in the graph, between the vertices corresponding to $x_i$ and $y_j$. We take $\hat C = [f(\hat d_{ij})]_{ij}$, and shall prove that $\hat d_{ij}$ converges to $d(x_i,y_j)$ which, to the best of our knowledge, is a novel result for geodesic convergence with \emph{unweighted} edges.

We first recall a few facts. It turns out that, for smooth and compact manifolds, $d(\cdot,\cdot)$ and $\norm{\cdot}$ are equivalent up to order three \cite{Belkin2008}: we let $h_\Mm, c_\Mm$ such that for all $\norm{x-y}\leq h_\Mm$,
\begin{equation}
  \abs{\norm{x-y} - d(x,y)} \leq c_\Mm \norm{x-y}^3 \label{eq:geodesic}
\end{equation}
Moreover, for $k$-dimensional manifolds a ball $\Bb_h(x) = \{y\in \Mm;d(x,y)\leq h\}$ has measure $\mu(\Bb_h(x)) \geq c_\Bb h^k$ for some constant $c_\Bb$. We have the following result.
\begin{theorem}\label{thm:geodesic}
  For $N$ large enough, with probability at least $1-\rho$, we have: for all $i,j$,
  \begin{align*}
    -c_\Mm h_N^2 (1+ R_N) \leq \frac{\hat d_{ij}}{d(x_i,y_j)} -1 \leq R_N
  \end{align*}
  where
  \begin{equation*}
    R_N \propto c_\Mm h_N + \pa{\frac{\log\frac{D_\Xx n m}{h_N \rho}}{c_z c_\Bb N h_N^k}}^{1/k} \xrightarrow[N\to \infty]{} 0
  \end{equation*}
  In particular, $\sup_{ij}\abs{\hat d_{ij} - d(x_i,y_j)} \leq D_\Xx R_N$.
\end{theorem}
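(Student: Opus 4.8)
The plan is to sandwich the edge-count $\text{SP}(i,j)$ of a shortest unweighted path between $x_i$ and $y_j$ between a deterministic lower bound and a high-probability upper bound, and then rescale by $h_N$ to read off the two-sided estimate for $\hat d_{ij}$. This is the $\epsilon$-graph analogue of the ISOMAP consistency argument of \cite{Bernstein2000}, the new twist being that the edges are \emph{unweighted}, so one can only count hops rather than sum edge lengths.

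For the lower bound I would argue deterministically. Along any path $z_{i_0}=x_i, z_{i_1},\dots,z_{i_L}=y_j$ with $L=\text{SP}(i,j)$ edges, each step satisfies $\norm{z_{i_t}-z_{i_{t+1}}}\le h_N$, so as soon as $N$ is large enough that $h_N\le h_\Mm$, the third-order comparison \eqref{eq:geodesic} gives $d(z_{i_t},z_{i_{t+1}})\le h_N(1+c_\Mm h_N^2)$. Summing over the path and using the triangle inequality for the geodesic distance yields $d(x_i,y_j)\le L\,h_N(1+c_\Mm h_N^2)$, i.e. $\text{SP}(i,j)\ge d(x_i,y_j)\,h_N^{-1}(1+c_\Mm h_N^2)^{-1}$, simultaneously for all pairs. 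After rescaling, this produces the $-c_\Mm h_N^2$ term of the statement.

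For the upper bound I would exhibit a short path and then bound, uniformly, the random event that makes it available. Fix a minimizing geodesic from $x_i$ to $y_j$ and mark ordered points $p_0=x_i,p_1,\dots,p_{L-1},p_L=y_j$ along it with $d(p_t,p_{t+1})\le h_N-2r_N$, so that $L=\lceil d(x_i,y_j)/(h_N-2r_N)\rceil$, for a radius $r_N<h_N/2$ to be chosen. The key is the event that every marked point $p_t$, over all target pairs $i\le n,\,j\le m$ and all $t$, has an auxiliary node in its geodesic ball $\Bb_{r_N}(p_t)$. Since $\mu(\Bb_{r_N}(p_t))\ge c_\Bb r_N^k$ and the sampling density is at least $c_z$, we get $\nu(\Bb_{r_N}(p_t))\ge c_z c_\Bb r_N^k$, so the probability that none of the $N-n-m\ge N/2$ i.i.d.\ auxiliary nodes lands there is at most $\exp(-\tfrac{1}{2} c_z c_\Bb r_N^k N)$. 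Summing over the at most $O(nm\,D_\Xx/h_N)$ such balls, the failure probability is $O\!\big(nm\,D_\Xx h_N^{-1}\exp(-\tfrac{1}{2} c_z c_\Bb r_N^k N)\big)$; choosing $r_N^k$ of order $(c_z c_\Bb N)^{-1}\log(D_\Xx nm/(h_N\rho))$ makes it at most $\rho$, and hypothesis \eqref{eq:rate_epsilon} is exactly what forces $r_N/h_N\to 0$ (so that $r_N<h_N/2$ for $N$ large). On the good event, keep $\tilde p_0=x_i$, $\tilde p_L=y_j$ and replace each interior $p_t$ by some $\tilde p_t\in\Bb_{r_N}(p_t)$; then $\norm{\tilde p_t-\tilde p_{t+1}}\le d(\tilde p_t,\tilde p_{t+1})\le(h_N-2r_N)+2r_N=h_N$, so consecutive $\tilde p_t$ are joined by an edge and $\text{SP}(i,j)\le L\le d(x_i,y_j)/(h_N-2r_N)+1$.

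Combining the two bounds on $\text{SP}(i,j)$, rescaling by $h_N$, and simplifying with $(1+u)^{-1}\ge1-u$, $(1-u)^{-1}\le1+2u$ for small $u$, $r_N/h_N\to0$ and $d(x_i,y_j)\le D_\Xx$, would yield the stated $R_N$ and the uniform additive bound $\sup_{ij}\abs{\hat d_{ij}-d(x_i,y_j)}\le D_\Xx R_N$. I expect the main obstacle to be the uniform-in-$(i,j)$ part of the upper bound: one must discretize every geodesic, union-bound over all $\Theta(nm)$ target pairs and the $\Theta(D_\Xx/h_N)$ marked points on each, and keep the constants explicit enough that the induced $r_N$ matches the rate in \eqref{eq:rate_epsilon} — whereas the lower bound is a one-liner once \eqref{eq:geodesic} is in hand. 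A little extra bookkeeping is also needed at the path endpoints (to be sure the first and last hops are genuine edges) and in observing that the good event, in particular, guarantees that $\text{SP}(i,j)$ is finite for every target pair.
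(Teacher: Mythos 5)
Your proof is correct and reaches the same rate as the paper's, but the construction of the short path witnessing the upper bound on $\text{SP}(i,j)$ is genuinely different, and arguably cleaner. The paper proves a one-step lemma (``from any point $x$ with $\norm{x-y}>h_N$, with probability $\geq 1-e^{-c_zc_\Bb N(\lambda_N h_N)^k}$ there is an auxiliary node within Euclidean distance $h_N$ that makes geodesic progress $\approx h_N(1-3\lambda_N - c_\Mm h_N^2)$ towards $y$'') and iterates it $M\lesssim D_\Xx/h_N$ times, conditioning at each step on the node just chosen and union-bounding over the $M$ conditional applications. You instead pre-place deterministic waypoints $p_t$ along each geodesic at geodesic spacing $h_N-2r_N$, and union-bound over the \emph{static} event that every ball $\Bb_{r_N}(p_t)$ (over all $\Theta(nm\,D_\Xx/h_N)$ of them) contains an auxiliary node. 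Both constructions involve the same number of balls per pair, hence the same failure probability, and your $r_N/h_N$ is exactly the paper's $\lambda_N$; the rates coincide. Your version avoids the conditional bookkeeping entirely, and it also dispenses with the third-order comparison \eqref{eq:geodesic} in the upper bound: you only need $\norm{\cdot}\le d(\cdot,\cdot)$ to certify that consecutive $\tilde p_t$ are edges, whereas the paper invokes \eqref{eq:geodesic} to compare the Euclidean step length $h_N(1-2\lambda_N)$ with the geodesic progress. (\eqref{eq:geodesic} remains essential for the lower bound, which you and the paper prove by essentially the same one-line argument, just reading \eqref{eq:geodesic} in opposite directions.) One shared loose end worth flagging: rounding $L$ (or $M$) up contributes a $+1$ that, after dividing by $d(x_i,y_j)$, produces an additive $h_N/d(x_i,y_j)$; neither your write-up nor the paper's tracks this explicitly, and the uniform multiplicative bound $\hat d_{ij}/d(x_i,y_j)-1\le R_N$ needs $h_N \ll \min_{ij} d(x_i,y_j)$, which holds for $N$ large under the paper's standing conventions but deserves a sentence.
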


Note that the estimator does not depend on the measure $\nu$. In fact, the proof shows that, since $\nu$ has a lower-bounded density, at any position on $\Mm$ there is always a node $z_i$ at distance ``about'' $h_N$ in any direction, including that of the geodesic path of interest. On the contrary, it is known for instance that this result does not hold if $\epsilon$-graphs are replaced by $k$-Nearest Neighbor graphs \cite{Alamgir2012}, which are strongly sensitive to the density $\nu$.
Combined with Thm.~\ref{thm:geodesic} and the Lipschitz property of $f$, Prop.~\ref{prop:stability_infty} and Thm.~\ref{thm:stability} yield the following stability bounds.

\begin{corollary}\label{cor:geodesic_ot}
  For $N$ large enough, with probability at least $1-\rho$, we have for all $\epsilon \geq 0$:
  \begin{equation*}
    \abs{\Ww_\epsilon^C(\alpha, \beta) - \Ww_\epsilon^{\hat C}(\alpha, \beta)} \leq c_f D_\Xx R_N
  \end{equation*}
  And for all $\epsilon>0$ and all distributions satisfying $\norm{\alpha}_\infty \leq c_\alpha/n$ and $\norm{\beta}_\infty \leq c_\beta/m$,
  \begin{equation*}
    \KL(P^C|P^{\hat C}) \lesssim \epsilon^{-\frac12}e^{4(c_{\max} - c_{\min})/\epsilon}\sqrt{c_\alpha c_\beta c_f D_\Xx R_N}
  \end{equation*}
\end{corollary}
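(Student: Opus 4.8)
The plan is to chain Theorem \ref{thm:geodesic} with the generic stability results of Section \ref{sec:stability}; the only preliminary work is to pass from a deviation bound on the estimated distances $\hat d_{ij}$ to one on the cost entries. Work on the event of probability at least $1-\rho$ furnished by Theorem \ref{thm:geodesic}, on which $\sup_{ij}\abs{\hat d_{ij} - d(x_i,y_j)} \leq D_\Xx R_N$. Since $C_{ij} = f(d(x_i,y_j))$, $\hat C_{ij} = f(\hat d_{ij})$ and $f$ is $c_f$-Lipschitz, this gives at once $\norm{C - \hat C}_\infty \leq c_f D_\Xx R_N$. Moreover both $C$ and $\hat C$ take values in the range $[c_{\min}, c_{\max}]$ of $f$, so the hypotheses of Theorem \ref{thm:stability} and Proposition \ref{prop:spec2F} are met with these constants.

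The first bound is then immediate from Proposition \ref{prop:stability_infty}, which holds for every $\epsilon \geq 0$. For the transport-plan bound I would substitute $\norm{C-\hat C}_\infty \leq c_f D_\Xx R_N$ into \eqref{eq:stability_TP}. This requires two elementary estimates. First, for the Frobenius and spectral norms, $\norm{C-\hat C}_F \leq \sqrt{nm}\,\norm{C-\hat C}_\infty$, and combining with Proposition \ref{prop:spec2F}, $\norm{e^{-C/\epsilon}-e^{-\hat C/\epsilon}} \leq \epsilon^{-1} e^{-c_{\min}/\epsilon}\sqrt{nm}\, c_f D_\Xx R_N$. Second, since $\alpha \in \Delta^n$ has $\norm{\alpha}_1 = 1$ and $\norm{\alpha}_\infty \leq c_\alpha/n$, we get $\norm{\alpha} \leq \sqrt{\norm{\alpha}_\infty \norm{\alpha}_1} \leq \sqrt{c_\alpha/n}$, and likewise $\norm{\beta}\leq \sqrt{c_\beta/m}$. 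Plugging these into the two terms of \eqref{eq:stability_TP}, the powers of $n$ and $m$ cancel: the first term is of order $\epsilon^{-1}e^{2(c_{\max}-c_{\min})/\epsilon}\sqrt{c_\alpha c_\beta}\, c_f D_\Xx R_N$ and the second of order $\epsilon^{-1/2}e^{4(c_{\max}-c_{\min})/\epsilon}(c_\alpha c_\beta)^{1/4}\sqrt{c_f D_\Xx R_N}$. Since $R_N \to 0$, for $N$ large $R_N \leq \sqrt{R_N}$, so the square-root term dominates and the $O(R_N)$ term is absorbed into the constant hidden by $\lesssim$, which gives the stated bound.

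There is no serious obstacle here: the corollary is a direct consequence of previously established results. The only points demanding a little care are the bookkeeping of the exponential prefactors — recalling that Theorem \ref{thm:stability} internally shifts the costs by $-c_{\min}$, so that $\bar c = c_{\max}-c_{\min}$ is the quantity that actually appears — and the identification of which of the two terms of \eqref{eq:stability_TP} is asymptotically dominant as $N \to \infty$. One should also be mindful that the step $\norm{\alpha}\leq\sqrt{\norm{\alpha}_\infty}$ relies crucially on $\alpha$ being a probability vector.
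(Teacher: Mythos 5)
Your argument is exactly the chain the paper intends: the paper states Corollary~\ref{cor:geodesic_ot} as an immediate consequence of Theorem~\ref{thm:geodesic}, the Lipschitz property of $f$, Proposition~\ref{prop:stability_infty} and Theorem~\ref{thm:stability} without writing out the substitution, and your calculation supplies those details correctly — including the cancellation of the $n,m$ factors via $\norm{\alpha}\leq\sqrt{c_\alpha/n}$, $\norm{\beta}\leq\sqrt{c_\beta/m}$, the exact matching of the exponent $(4c_{\max}-7c_{\min}/2)/\epsilon - c_{\min}/(2\epsilon)=4(c_{\max}-c_{\min})/\epsilon$, and the absorption of the $O(R_N)$ term into the $O(\sqrt{R_N})$ term for $N$ large. (Your bound is in fact marginally sharper, with $(c_\alpha c_\beta)^{1/4}$ in place of $(c_\alpha c_\beta)^{1/2}$; since $c_\alpha,c_\beta\geq 1$ this implies the stated inequality.)
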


\paragraph{Numerical illustration.} In Fig.~\ref{fig:local_TP} we give a simple numerical illustration of the optimal transport plan on a 2D domain (even though it is technically not a smooth manifold without boundary). In Fig.~\ref{fig:local}, we illustrate the convergence bounds of Cor.~\ref{cor:geodesic_ot} on the 3D sphere, where the true geodesics are known. It can be seen that, unlike the theory predicted, the convergence of $\KL(P^C|P^{\hat C})$ does not seem to be slower than that of the OT distance itself. Note that in both cases we use a non-uniform measure $\nu$. 

\begin{figure}[h]
  \centering
  \includegraphics[width=.15\textwidth]{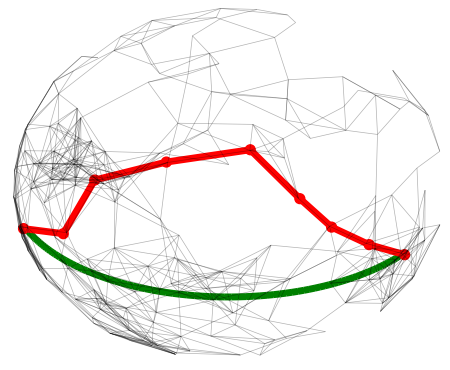}
  \includegraphics[width=.15\textwidth]{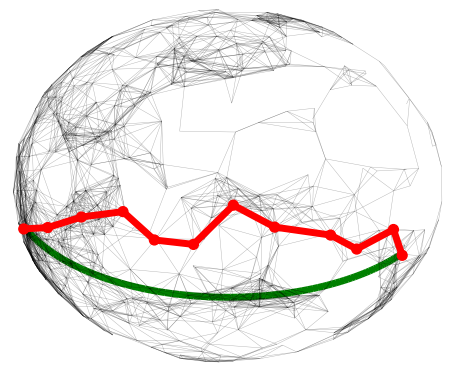}
  \includegraphics[width=.15\textwidth]{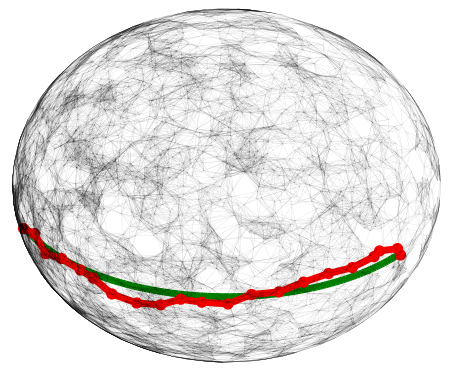} \\
  \includegraphics[height=2.9cm]{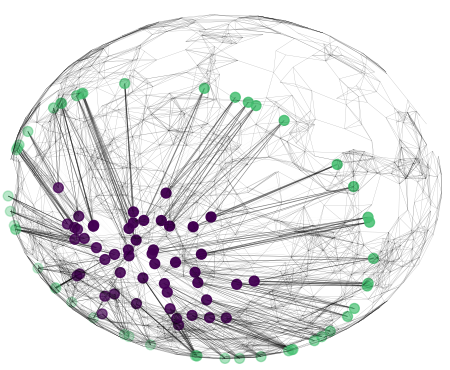}
  \includegraphics[height=2.9cm]{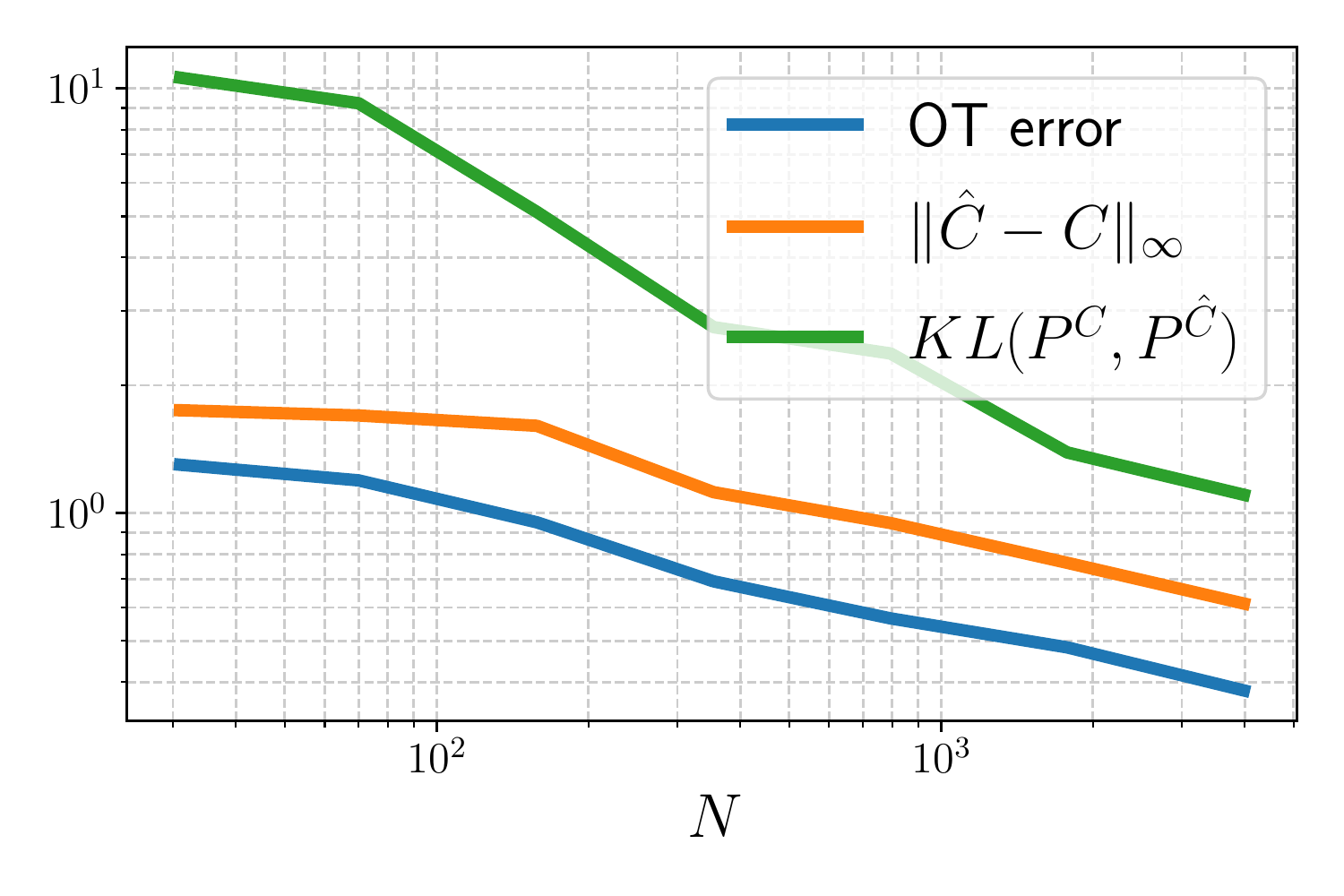}
  \caption{\small Shortest paths on $\epsilon$-graphs on the 3D sphere. \textbf{Top:} illustration of convergence of the shortest path in the graph (red) to the true shortest path (green). \textbf{Bottom left:} example of transport plan on the sphere using geodesic cost. \textbf{Bottom right:} convergence of the norm $\norm{\hat C - C}_\infty$, the normalized error $\abs{1 - \frac{\Ww_\epsilon^{\hat C}}{\Ww_\epsilon^{C}}}$, and the KL divergence $\KL(P^{C}|P^{\hat C})$.}
  \vspace{-10pt}
  \label{fig:local}
\end{figure}

\section{Non-local kernels}\label{sec:nonlocal}

In this section, we look at RGs with \textbf{non-local kernels} $w_N = \rho_N w$, for a fixed kernel $w$ and sparsity factor $\rho_N \gtrsim \log N/N$. This regime is usually referred to as \emph{relatively sparse} \cite{Araya2019}, that is, the expected number of edges in the random graph evolves as $N\log N$. We denote by $W\in[0,1]^{N \times N}$ the matrix $W = \EE(A/\rho_N)$ containing the true values of the kernel $w$ between pairs of points.
In these settings, it is known that the adjacency matrix of the graph will somewhat concentrate around its expectation \cite{Lei2015}. Hence, if the OT cost is related to the kernel $w$, the adjacency matrix may directly be an estimation of the cost matrix, unlike the previous section, where the shortest paths in the graph were the quantities of interest. Of course, the \emph{individual} elements of the adjacency matrix will \emph{not} concentrate, and the bound in Prop.~\ref{prop:stability_infty} will not be sufficient. Instead, we shall use the bounds in Thm.~\ref{thm:stability}, valid only for non-zero entropic regularization $\epsilon>0$.

Here we will see that only the edges between the target nodes $x_i,y_j$ will be used in our estimators. Hence we assume that $N=n+m$ and that the latent variables are simply $\{x_1,\ldots, x_n, y_1,\ldots, y_m\}$ (i.e. there is no ``auxiliary'' nodes $z_{n+m+i}$ or they are ignored). We take $n \sim m$ and let $N=n+m \to \infty$.  For simplicity, we assume that $\rho_N$ is known (or estimated).
We present two strategies: a generic estimator that works for any positive semidefinite kernel $w$, and a particular case for specific kernel and fixed $\epsilon$, where a more direct estimator leads to faster rates of convergence.

\subsection{USVT estimator}\label{sec:usvt}

In this section we assume that $w(z,z')$ is a \emph{positive semi-definite kernel} \cite{Berlinet2004} satisfying $0 \leq w_{\min} \leq w(z,z') \leq w_{\max} \leq 1$. We take a cost function of the form:
\begin{equation}\label{eq:usvt_cost}
  c(z,z') = f(w(z,z'))
\end{equation}
For some $c_f$-Lipschitz function $f:[w_{\min},w_{\max}] \to [c_{\min},c_{\max}]$. Hence $C = f(W_{1:n, n+1:N})$, and we would like to estimate $\hat W$ and take $\hat C = f(\hat W_{1:n, n+1:N})$. In practice, the kernel $w$ is of course unknown, however it is reasonable to assume that it decreases when $z,z'$ get further away from each other, hence $f$ is generally chosen as a decreasing function. For instance, we use $f(x)=1-x$ and a Gaussian kernel $w(z,z') = e^{-\frac{\norm{z-z'}^2}{2\sigma^2}}$ in our experiments (Fig.~\ref{fig:nonlocal_single}, \ref{fig:nonlocal}).

It is known that $A/\rho_N$ directly concentrates around $W$ \emph{in operator norm} \cite{Lei2015} but not in Frobenius norm, as needed by Thm.~\ref{thm:stability} and Prop.~\ref{prop:spec2F}. However, convergence can be restored using the so-called USVT estimator \cite{Chatterjee2015}. If $A$ is diagonalized as $A = \sum_i \sigma_i a_i a_i^\top$ for an orthonormal basis $\{a_i\}$, the USVT estimator is a low-rank approximation defined as
\begin{equation}\label{eq:usvt}
  \hat W_\gamma \eqdef \textup{HT}_{[w_{\min},w_{\max}]}\Big(\rho_N^{-1}\sum_{\sigma_i \geq \gamma\sqrt{\rho_N N}} \sigma_i a_i a_i^\top\Big)
\end{equation}
where $\gamma>0$ is some constant and $\textup{HT}_{[w_{\min},w_{\max}]}$ is a hard thresholding function that projects each entry onto $[w_{\min},w_{\max}]$. The following lemma is adapted from \cite{Chatterjee2015} combined with a result in \cite{Lei2015}.
\begin{theorem}\label{thm:usvt}
  For any $r>0$, there are two constants $\gamma_r, c_r$ such that the following holds. With probability at least $1-N^{-r}$, we have
  \begin{equation}\label{eq:usvt_bound}
    \frac{1}{N} \norm{\hat W_{\gamma_r} - W}_F \leq \frac{c_r}{(\rho_N N)^{1/4}}\, .
  \end{equation}
\end{theorem}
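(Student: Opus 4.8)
The plan is to derive \eqref{eq:usvt_bound} by combining an operator-norm concentration bound for the adjacency matrix with the deterministic approximation guarantee underlying Singular Value Thresholding, the latter specialized to the positive semi-definite setting. Since $A$ has independent Bernoulli entries above the diagonal with variances bounded by $\rho_N$, the matrix concentration estimates used in \cite{Lei2015} give, for every $r>0$, a constant $K_r$ such that $\norm{A - \rho_N W} \le K_r\sqrt{\rho_N N}$ with probability at least $1 - N^{-r}$; this is exactly where the relatively sparse assumption $\rho_N \gtrsim \log N/N$ enters, ensuring the variance term $\sqrt{\rho_N N}$ dominates the boundedness term of order $\sqrt{\log N}$, so that no regularization of $A$ is needed. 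I would then fix $\gamma_r := 2K_r$ and work on this event throughout.

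On this event, write $M := \rho_N W$, $\tau := \gamma_r\sqrt{\rho_N N}$, and $\tilde A := \sum_{\sigma_i \ge \tau}\sigma_i a_i a_i^\top$, so that $\hat W_{\gamma_r} = \textup{HT}_{[w_{\min},w_{\max}]}(\rho_N^{-1}\tilde A)$. Since $\norm{A - M} \le \tau/2$ by construction, Weyl's inequality bounds $\abs{\sigma_i(A) - \sigma_i(M)} \le \norm{A-M} \le \tau/2$ for all $i$, and the standard deterministic SVT estimate (as in the proof of the main result of \cite{Chatterjee2015}) yields $\norm{\tilde A - M}_F \le c\,\sqrt{\sum_i \min(\sigma_i(M)^2,\tau^2)}$ for an absolute constant $c$. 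Using $\min(a^2,\tau^2)\le a\tau$ for $a\ge 0$, the right-hand side is at most $c\sqrt{\tau\,\norm{M}_*}$. Here the positive semi-definiteness of $w$ is used in an essential way: $W$ is a symmetric psd matrix with diagonal entries $w(z_i,z_i)\le w_{\max}\le 1$, so $\norm{M}_* = \rho_N\operatorname{tr}(W) \le \rho_N N$, hence $\norm{\tilde A - M}_F \le c\sqrt{\gamma_r}\,(\rho_N N)^{3/4}$. It then remains to transfer this to $\hat W_{\gamma_r}$: the entrywise projection $\textup{HT}_{[w_{\min},w_{\max}]}$ is $1$-Lipschitz for $\norm{\cdot}_F$ and fixes $W$ (whose entries lie in $[w_{\min},w_{\max}]$), so $\norm{\hat W_{\gamma_r} - W}_F \le \rho_N^{-1}\norm{\tilde A - M}_F \le c\sqrt{\gamma_r}\,\rho_N^{-1}(\rho_N N)^{3/4}$; dividing by $N$ gives $\frac1N\norm{\hat W_{\gamma_r} - W}_F \le c\sqrt{\gamma_r}\,(\rho_N N)^{-1/4}$, i.e. \eqref{eq:usvt_bound} with $c_r := c\sqrt{\gamma_r}$.

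There is no single hard step, but the points needing care are: (i) matching the threshold $\gamma\sqrt{\rho_N N}$ in \eqref{eq:usvt} to the actual noise level $\norm{A - \rho_N W} \asymp \sqrt{\rho_N N}$, which is what makes $\gamma_r$ an absolute constant and is only possible in the relatively sparse regime; (ii) the nuclear-norm bound $\norm{W}_* = \operatorname{tr}(W) \le N$, which crucially relies on $w$ being psd — without it one could only use $\norm{W}_* \le \sqrt N\,\norm{W}_F \le N^{3/2}$, which destroys the rate, and this is the main reason the hypothesis is imposed; and (iii) tracking the dependence of the concentration constant $K_r$ on $r$ so that it propagates cleanly into $\gamma_r$ and $c_r$. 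The one genuinely nontrivial ingredient invoked as a black box is the deterministic SVT estimate of \cite{Chatterjee2015}; everything else is bookkeeping.
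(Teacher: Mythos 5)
Your proof is correct, and it reaches the same key intermediate inequality as the paper, namely $\norm{\hat W_{\gamma} - W}_F \lesssim \sqrt{\norm{W}_\star\,\norm{A/\rho_N - W}}$, followed by $\norm{W}_\star = \operatorname{tr}(W)\le N$ (using psd-ness) and the Lei--Rinaldo operator-norm bound. The difference is only in how that intermediate inequality is obtained: the paper carries out the bookkeeping by hand --- decomposing through $G = \sum_{i\in S}\tau_i w_i w_i^\top$ (the truncation of $W$ at the index set $S$ selected by $A$'s spectrum), bounding $\abs{S}$ via Weyl's inequality, and adding the two contributions --- whereas you invoke as a black box the deterministic SVT oracle inequality $\norm{\tilde A - M}_F \lesssim \sqrt{\sum_i \min(\sigma_i(M)^2,\tau^2)}$ valid when $\tau \ge 2\norm{A-M}$, and then relax it via $\min(a^2,\tau^2)\le a\tau$. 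Your route is cleaner and makes the structure transparent (the oracle inequality is exactly what the paper's several-step decomposition proves in disguise), at the cost of citing a result that is ``folklore'' rather than stated verbatim in \cite{Chatterjee2015}; the paper's explicit argument is self-contained and tracks the constants' dependence on $\gamma$ and $c_r$. One small remark: both you and the paper use that $\textup{HT}_{[w_{\min},w_{\max}]}$ fixes $W$; strictly speaking $W = \EE(A/\rho_N)$ has zero diagonal while $w(z_i,z_i)\in[w_{\min},w_{\max}]$, so the paper is implicitly identifying $W$ with the kernel matrix --- this is harmless since the diagonal contributes $O(1)$ to the operator-norm concentration and $O(\sqrt N)$ to the Frobenius norm, both negligible, but it is worth being aware of.
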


We can now define $\hat C$ as the appropriate rectangular part $\hat C_{\gamma_r} \eqdef f\pa{(\hat W_{\gamma_r})_{1:n, n+1:N}}$. Thm.~\ref{thm:usvt} combined with Thms.~\ref{thm:stability} leads to the following result.

\begin{corollary}\label{cor:usvt}
  For any $r, \epsilon>0$, there are two constants $\gamma_r, c_r$ such that the following holds. With probability at least $1-N^{-r}$: for all distributions satisfying $\norm{\alpha}_\infty \leq c_\alpha/n$ and $\norm{\beta}_\infty \leq c_\beta/m$,
  \begin{equation*}
    \abs{W_\epsilon^C(\alpha, \beta) - W_\epsilon^{\hat C_{\gamma_r}}(\alpha, \beta)} \lesssim \frac{c_r c_f c_\alpha c_\beta e^{2(c_{\max}-c_{\min})/\epsilon}}{(\rho_N N)^{1/4}} \\
  \end{equation*}
  and
  \begin{equation*}
    \KL(P^C|P^{\hat C}) \lesssim \epsilon^{-\frac12}e^{4(c_{\max}-c_{\min})/\epsilon}\frac{\sqrt{c_r c_f c_\alpha c_\beta}}{(\rho_N N)^{1/8}}
  \end{equation*}
\end{corollary}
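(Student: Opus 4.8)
The plan is to combine the USVT concentration bound of Theorem~\ref{thm:usvt} with the two stability estimates of Theorem~\ref{thm:stability}, doing only elementary norm conversions in between. Fix $r,\epsilon>0$ and let $\gamma_r,c_r$ be the constants of Theorem~\ref{thm:usvt}, so that on an event of probability at least $1-N^{-r}$ one has $\frac1N\norm{\hat W_{\gamma_r}-W}_F \leq c_r(\rho_N N)^{-1/4}$; I would work on this event throughout. Since $C=f(W_{1:n,n+1:N})$ and $\hat C_{\gamma_r}=f\pa{(\hat W_{\gamma_r})_{1:n,n+1:N}}$ with $f$ being $c_f$-Lipschitz, and since the Frobenius norm of a submatrix is at most that of the whole matrix, this gives
\begin{equation*}
  \norm{C-\hat C_{\gamma_r}}_F \leq c_f \norm{\hat W_{\gamma_r}-W}_F \leq c_f c_r N (\rho_N N)^{-1/4}.
\end{equation*}
I would also record the simplex bound: for $\alpha\in\Delta^n$ with $\norm{\alpha}_\infty\leq c_\alpha/n$ one has $\norm{\alpha}^2=\sum_i\alpha_i^2\leq\norm{\alpha}_\infty\leq c_\alpha/n$, hence $\norm{\alpha}\leq\sqrt{c_\alpha/n}$, and likewise $\norm{\beta}\leq\sqrt{c_\beta/m}$, so $\norm{\alpha}\norm{\beta}\leq\sqrt{c_\alpha c_\beta/(nm)}$.

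For the OT-distance bound I would apply Proposition~\ref{prop:spec2F} to get $\norm{e^{-C/\epsilon}-e^{-\hat C_{\gamma_r}/\epsilon}}\leq\epsilon^{-1}e^{-c_{\min}/\epsilon}\norm{C-\hat C_{\gamma_r}}_F$ and plug this, together with the mass bounds, into \eqref{eq:stability_spectral}. The factor $\epsilon$ cancels $\epsilon^{-1}$, the exponentials combine to $e^{2(c_{\max}-c_{\min})/\epsilon}$, and one is left with $\abs{\Ww_\epsilon^C(\alpha,\beta)-\Ww_\epsilon^{\hat C_{\gamma_r}}(\alpha,\beta)}\leq e^{2(c_{\max}-c_{\min})/\epsilon}\sqrt{c_\alpha c_\beta/(nm)}\,c_f c_r N(\rho_N N)^{-1/4}$. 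Since $n\sim m$, one has $N=n+m\asymp\sqrt{nm}$, so $N/\sqrt{nm}=O(1)$ and the dimension-dependent factors are absorbed into the implicit constant, yielding the first claimed bound.

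For the transport-plan bound I would feed the same estimates into \eqref{eq:stability_TP}. Its first term equals $\epsilon^{-1}e^{2(c_{\max}-c_{\min})/\epsilon}\norm{\alpha}\norm{\beta}\norm{C-\hat C_{\gamma_r}}_F=O\big((\rho_N N)^{-1/4}\big)$ after the same cancellation. Its second term is $e^{(4c_{\max}-7c_{\min}/2)/\epsilon}\sqrt{\norm{\alpha}\norm{\beta}\norm{e^{-C/\epsilon}-e^{-\hat C_{\gamma_r}/\epsilon}}}$; bounding the operator norm again by Proposition~\ref{prop:spec2F}, it is of order $e^{(4c_{\max}-7c_{\min}/2)/\epsilon}\big(c_\alpha c_\beta/(nm)\big)^{1/4}\big(\epsilon^{-1}e^{-c_{\min}/\epsilon}c_f c_r N(\rho_N N)^{-1/4}\big)^{1/2}$. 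The square root turns $(\rho_N N)^{-1/4}$ into $(\rho_N N)^{-1/8}$, the exponents combine as $4c_{\max}-7c_{\min}/2-c_{\min}/2=4(c_{\max}-c_{\min})$, the $\epsilon$-powers give $\epsilon^{-1/2}$, and $\sqrt N\,(nm)^{-1/4}=O(1)$ again by $n\sim m$. Since $(\rho_N N)^{-1/8}$ dominates $(\rho_N N)^{-1/4}$, the second term sets the rate, and absorbing the first term into it gives $\KL(P^C|P^{\hat C})\lesssim\epsilon^{-1/2}e^{4(c_{\max}-c_{\min})/\epsilon}\sqrt{c_r c_f c_\alpha c_\beta}\,(\rho_N N)^{-1/8}$.

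There is no genuine obstacle here: the corollary is a bookkeeping exercise stacking Theorems~\ref{thm:usvt} and~\ref{thm:stability}. The only place demanding care is propagating the exponential prefactors and the powers of $\rho_N N$ through the square root in \eqref{eq:stability_TP}, so as to land exactly on the exponent $1/8$ and on $4(c_{\max}-c_{\min})/\epsilon$, together with the two elementary but essential facts that $\norm{\alpha}\leq\sqrt{\norm{\alpha}_\infty}$ on the simplex and that $N\asymp\sqrt{nm}$ when $n\sim m$, which is what makes the $n,m,N$ factors cancel.
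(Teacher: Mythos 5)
Your proof is correct and is exactly the intended combination of Theorem~\ref{thm:usvt}, Proposition~\ref{prop:spec2F}, and Theorem~\ref{thm:stability} that the paper sketches in one sentence without spelling out. All the bookkeeping checks out: the simplex bound $\norm{\alpha}^2\leq\norm{\alpha}_\infty$, the Lipschitz transfer $\norm{C-\hat C_{\gamma_r}}_F\leq c_f\norm{\hat W_{\gamma_r}-W}_F$, the cancellation $N/\sqrt{nm}=O(1)$ under $n\sim m$, and the exponent bookkeeping $4c_{\max}-7c_{\min}/2-c_{\min}/2=4(c_{\max}-c_{\min})$ are all right. The only remark worth making is that your derivation actually yields the slightly sharper dependence $\sqrt{c_\alpha c_\beta}$ in the OT-distance bound and $(c_\alpha c_\beta)^{1/4}$ in the KL bound, whereas the corollary states the looser $c_\alpha c_\beta$ and $\sqrt{c_\alpha c_\beta}$; since $c_\alpha,c_\beta\geq 1$ this is a strict improvement, consistent with the $\lesssim$ notation, so nothing is amiss.
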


As remarked in the original USVT paper \cite{Chatterjee2015}, despite its good theoretical properties the estimator $\hat W_\gamma$ may be difficult to use in practice, as the constant $\gamma$ can be hard to adjust. It is however a good inspiration to combine with methods that learn the cost for robust OT \cite{Carlier2020, Dhouib2020}, which we leave for future work.

\begin{proof}[Sketch of proof of Thm.~\ref{thm:usvt}.] The proof is based on the following concentration results on symmetric matrices with Bernoulli entries, due to Lei and Rinaldo \cite{Lei2015}.
\begin{theorem}[\cite{Lei2015}]\label{thm:Lei}
  For any $r>0$, there is a constant $c_r$ such that the following holds. With probability at least $1-n^{-r}$ we have:
  \begin{equation}
    \norm{A - \rho_N W} \leq c_r \sqrt{\rho_N N}
  \end{equation}
\end{theorem}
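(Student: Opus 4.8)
The statement is the Lei--Rinaldo adaptation to inhomogeneous edge probabilities of the classical Feige--Ofek bound on the spectral norm of a sparse random graph, and I would prove it along those lines. Note first that the bound needs the standing sparsity assumption of this section, $\rho_N N\gtrsim\log N$: some such lower bound is unavoidable, since for $\rho_N N$ of constant order the maximal degree, hence $\norm{A}$ itself, already exceeds $\sqrt{\rho_N N}$ by a factor of order $\sqrt{\log N/\log\log N}$. Set $M\eqdef A-\rho_N W$. After discarding its diagonal (a matrix of norm at most $\rho_N\le\sqrt{\rho_N N}$), $M$ is symmetric with independent upper-triangular entries $M_{ij}$, $i<j$, each mean zero with $\abs{M_{ij}}\le 1$ and variance $\rho_N W_{ij}(1-\rho_N W_{ij})\le\rho_N$.

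First I would discretize: fixing a $\tfrac14$-net $\mathcal N$ of the unit sphere of $\RR^N$ with $\abs{\mathcal N}\le 9^N$ gives $\norm{M}\le 2\max_{x,y\in\mathcal N}x^\top M y$, so it suffices to bound $x^\top M y=\sum_{ij}x_i M_{ij}y_j$ for fixed $(x,y)$ with failure probability $o(9^{-2N}N^{-r})$ and union bound. Following Feige--Ofek, split the sum into a \emph{light} part $S_L$ over index pairs with both $\abs{x_iy_j}$ and $\abs{x_jy_i}$ at most $\sqrt{\rho_N/N}$, and a \emph{heavy} part $S_H$ over the rest. In $S_L$ each independent summand is $O(\sqrt{\rho_N/N})$ and the total variance is $\lesssim\rho_N\norm{x}^2\norm{y}^2=\rho_N$, so Bernstein's inequality gives $\Pr(\abs{S_L}>K\sqrt{\rho_N N})\le 2e^{-cKN}$ for $K$ large, which comfortably absorbs both the $9^{2N}$ pairs of the net and the $N^r$ factor.

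The heavy part carries the real difficulty. Splitting $M_{ij}=A_{ij}-\rho_N W_{ij}$, the expectation contribution is deterministic: on heavy pairs $\abs{x_iy_j}\le(x_iy_j)^2\sqrt{N/\rho_N}$, so $\sum_{\text{heavy}}\rho_N W_{ij}\abs{x_iy_j}\le\rho_N\sqrt{N/\rho_N}\sum_{ij}(x_iy_j)^2=\sqrt{\rho_N N}$, uniformly over the sphere. The term $\sum_{\text{heavy}}x_iA_{ij}y_j$, however, cannot be controlled by concentration for a single $(x,y)$ --- in the sparse regime it is dominated by the rare occurrence of anomalously dense small subgraphs --- and must be bounded via structural properties of the random graph holding simultaneously for all $(x,y)$. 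Concretely, with probability $\ge 1-N^{-r'}$: (i) every vertex has degree $\lesssim\rho_N N$ (Chernoff and a union bound, using $\rho_N N\gtrsim\log N$); and (ii) the \emph{bounded discrepancy} property holds: for all $S,T\subseteq[N]$, either $e(S,T)\lesssim\rho_N\abs{S}\abs{T}$, or $e(S,T)\log\frac{e(S,T)}{\rho_N\abs{S}\abs{T}}\lesssim\max(\abs{S},\abs{T})\log\frac{N}{\max(\abs{S},\abs{T})}$, where $e(S,T)$ is the number of edges between $S$ and $T$. Bucketing the coordinates of $x$ and $y$ into dyadic scales and applying (i)--(ii) scale by scale then yields $\sum_{\text{heavy}}x_iA_{ij}y_j\lesssim\sqrt{\rho_N N}$ for every unit $x,y$.

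On the intersection of the Bernstein events over $\mathcal N$ and the structural event above, $\norm{M}\le 2\max_{\mathcal N}(S_L+S_H)\lesssim\sqrt{\rho_N N}$; choosing constants so the total failure probability is at most $N^{-r}$ --- hence at most $n^{-r}$ after renaming $c_r$, since $N=n+m$ with $n\sim m$ --- finishes. The main obstacle is (ii) and its dyadic use: the discrepancy estimate for inhomogeneous sparse random graphs is the technical heart of the Feige--Ofek/Lei--Rinaldo argument and precisely the point where $\rho_N N\gtrsim\log N$ is indispensable; by contrast, a cruder argument through matrix Bernstein would lose a $\sqrt{\log N}$ factor, and the rest of the proof is a routine net-plus-Bernstein computation.
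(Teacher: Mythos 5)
The paper offers no proof of this statement: it is imported verbatim from Lei and Rinaldo (their concentration theorem for adjacency matrices of inhomogeneous Erd\H{o}s--R\'enyi graphs), so there is no internal argument to compare yours against. Your outline is a faithful reconstruction of the actual proof in that reference, which indeed follows the Feige--Ofek template: the $\tfrac14$-net reduction, the light/heavy splitting of index pairs at threshold $\sqrt{\rho_N/N}$, Bernstein for the light part against the $9^{2N}$ union bound, the deterministic handling of the $\rho_N W_{ij}$ contribution on heavy pairs via $\abs{x_iy_j}\le (x_iy_j)^2\sqrt{N/\rho_N}$, and the bounded-degree plus discrepancy properties combined with dyadic bucketing for $\sum_{\text{heavy}} x_i A_{ij} y_j$. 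You also correctly identify where the standing assumption $\rho_N\gtrsim \log N/N$ of this section enters, why a plain matrix-Bernstein route would lose a $\sqrt{\log N}$ factor, and why the diagonal and the $n$-versus-$N$ bookkeeping are harmless.

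The one thing to flag is that your write-up asserts rather than proves the two structural properties (uniform degree bound and bounded discrepancy) and the dyadic-bucketing computation that converts them into the bound on the heavy sum; as you yourself note, this is the technical heart of the argument, and a self-contained proof would need to supply it (it occupies most of the appendix of Lei--Rinaldo and of Feige--Ofek). As a justification for a result the paper cites as known, your level of detail is entirely appropriate, and nothing in the outline is wrong.
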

Then, using a similar strategy to \cite{Chatterjee2015}, we show in the appendix that
\begin{equation*}
  \norm{\hat W_\gamma - W}_F \lesssim \sqrt{\norm{W}_\star\norm{A/\rho_N - W}}
\end{equation*}
where $\norm{\cdot}_\star$ is the nuclear norm. Since $w$ is a p.s.d. kernel, $W$ is p.s.d., and $\norm{W}_\star = Tr(W) \leq n$. Theorem \ref{thm:Lei} concludes the proof.
\end{proof}

\begin{figure}[ht]
  \centering
  \includegraphics[width=.15\textwidth]{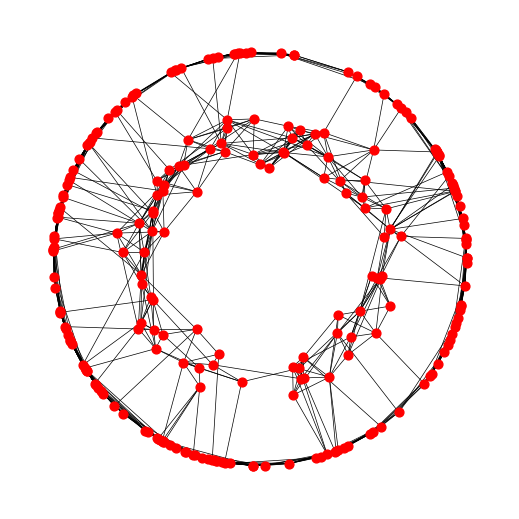}
  \includegraphics[width=.15\textwidth]{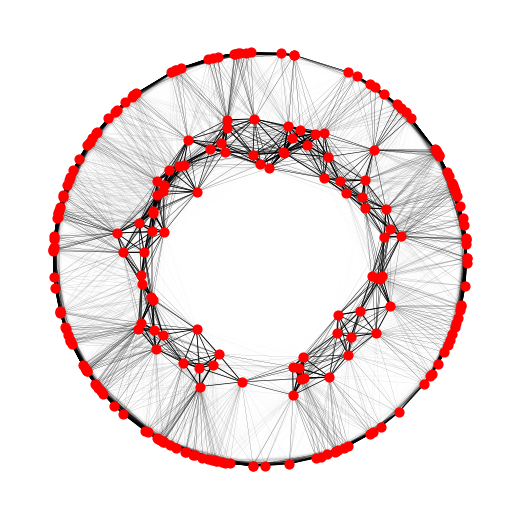}
  \includegraphics[width=.15\textwidth]{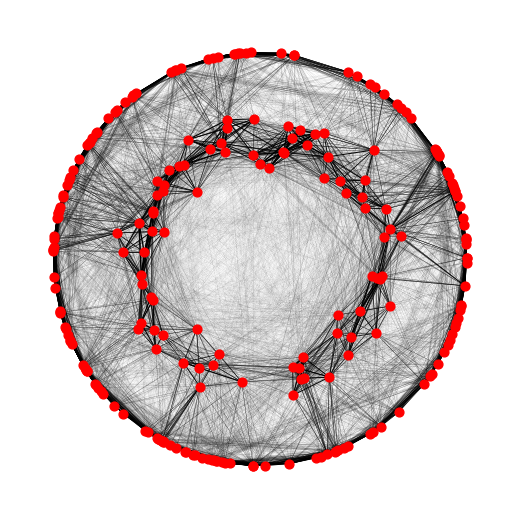}\\
  \includegraphics[width=.22\textwidth]{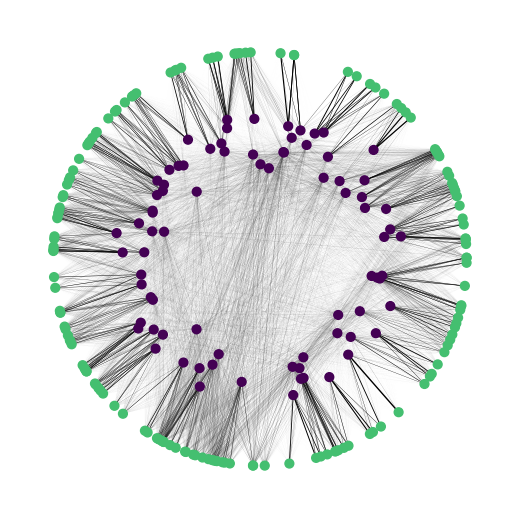}
  \includegraphics[width=.22\textwidth]{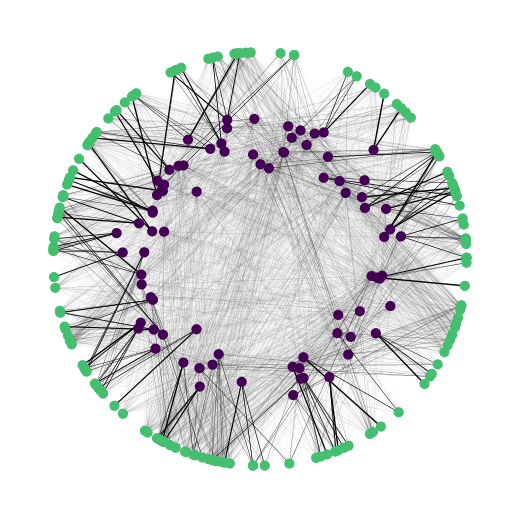}
  \caption{\small \textbf{Top,} from left to right: random graph, true edge weights, and edge weights estimated with the USVT estimator. \textbf{Bottom:} comparison between the true transport plan $P$ obtained when computing $\Ww_{\epsilon}^C$ and the one obtained when computing $\Ww_{\epsilon}^{\hat C_{\gamma_r}}$.}
  \label{fig:nonlocal_single}
\end{figure}

\subsection{Fast rate with Gaussian-like kernel}\label{sec:fast}

The convergence result of Thm. \ref{thm:usvt} uses the version of the stability bound in Prop.~\ref{prop:spec2F} involving the Frobenius norm between $\hat C$ and $C$. In this section, we illustrate a specific case exploiting the operator norm in \eqref{eq:stability_spectral} which leads to faster rates of convergence.
We still consider non-local kernel $w_N = \rho_N w$, but here \emph{specifically} with a kernel of the form:
\begin{equation}\label{eq:gaussian_kernel}
  w(x,y) = e^{-\frac{\norm{x-y}^p}{\sigma}}
\end{equation}
such as, for instance, the Gaussian kernel when $p=2$. We assume that $\sigma$ is known. One then notices that the matrix $W$ containing the $w(x_i,y_j)$ \emph{directly} looks like the matrix $K = e^{-C/\epsilon}$, for the very specific choices:
\begin{equation}\label{eq:fast_choices}
  c(x,y) = \norm{x-y}^p,\quad \epsilon = \sigma
\end{equation}
If taking $c$ as a power of the Euclidean distance is a classic choice (leading to the so-called $p$-Wasserstein distance), here one notes that the choice of $\epsilon$ is imposed, which is a major shortcoming compared to the previous ``universal'' strategy.

Nevertheless, in this case we can directly define an estimator of the matrix $K$ as the normalized rectangular part of the adjacency matrix:
\begin{equation}\label{eq:fast_estimator}
  \hat K = \rho_N^{-1} A_{1:n, n+1:N}
\end{equation}
Remark that this estimator is extremely simple, in particular it only uses the edges between the $x_i$ and the $y_j$ as if the graph were bipartite, unlike the USVT estimator which uses the whole adjacency matrix.

We could directly plug this estimator into the dual problem \eqref{eq:dualOT}. Unfortunately, $\hat K$ is not bounded away from $0$, so one cannot apply Lemma \ref{lem:dualbound} to bound the dual potential and carry on with the proof of the stability bounds like in Theorem \ref{thm:stability}. Instead, one has to directly enforce box constraints, and we will instead solve \eqref{eq:dualOTbounded} to obtain some $\Ll^{\hat K}_{\epsilon,\eta}(\alpha,\beta)$, for some $\eta$. As mentioned earlier, this can be handled with a block-coordinate ascent with an additional projection step. It leads to the following result proved in App.~\ref{app:fast}, whose convergence rate is twice as fast as the bound of Thm.~\ref{thm:usvt}. Note however that in this case we do not have convergence in Frobenius norm.

\begin{theorem}\label{thm:fast}
  Define $c_{\min}, c_{\max}$ such that $0\leq c_{\min} \leq C_{ij} \leq c_{\max}$ and pick $\eta \geq e^{\frac{c_{\max}-c_{\min}/2}{2\sigma^2}}$. For any $r>0$, there is a constant $c_r$ such that the following holds. With probability at least $1-N^{-r}$: for all distributions satisfying $\norm{\alpha}_\infty \leq c_\alpha/n$ and $\norm{\beta}_\infty \leq c_\beta/m$,
  \begin{equation}\label{eq:fast_ot_bound}
      \abs{\Ww^{C}_{\sigma}(\alpha,\beta) - \Ll^{\hat K}_{\sigma,\eta}(\alpha,\beta)} \lesssim \frac{c_r c_\alpha c_\beta \sigma \eta^2}{\sqrt{\rho_N N}}
  \end{equation}
\end{theorem}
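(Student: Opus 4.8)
The plan is to route everything through the box-constrained dual \eqref{eq:dualOTbounded} and to reuse the spectral estimate \eqref{eq:spectral_bound_on_K} obtained while proving Theorem~\ref{thm:stability}: that estimate holds for \emph{any} pair of matrices $K,\hat K$ and never used that $\hat K$ is bounded away from $0$, which is precisely the property that fails for the estimator \eqref{eq:fast_estimator}. Writing $K \eqdef e^{-C/\sigma}$, the triangle inequality gives
\[
  \abs{\Ww^{C}_{\sigma}(\alpha,\beta) - \Ll^{\hat K}_{\sigma,\eta}(\alpha,\beta)}
  \le \abs{\Ww^{C}_{\sigma}(\alpha,\beta) - \Ll^{K}_{\sigma,\eta}(\alpha,\beta)}
  + \abs{\Ll^{K}_{\sigma,\eta}(\alpha,\beta) - \Ll^{\hat K}_{\sigma,\eta}(\alpha,\beta)},
\]
and I would bound the two terms separately.

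For the first term, the choices \eqref{eq:fast_choices} give $K_{ij} = e^{-\norm{x_i-y_j}^p/\sigma} = w(x_i,y_j)$, so $K = W_{1:n,\,n+1:N}$ and $0 < e^{-c_{\max}/\sigma} \le K_{ij} \le e^{-c_{\min}/\sigma} \le 1$. Since $\eta$ in the statement is at least the Lemma~\ref{lem:dualbound} threshold $\sqrt{\delta_{\max}}/\delta_{\min}$ associated with $K$, that lemma supplies optimal potentials of \eqref{eq:dualOT} lying in $\Bb_{\sigma\log\eta}^{\norm{\cdot}_\infty}$; hence the box constraint in \eqref{eq:dualOTbounded} is inactive for $K$ and $\Ll^{K}_{\sigma,\eta}(\alpha,\beta) = \Ll^{K}_{\sigma}(\alpha,\beta) = \Ww^{C}_{\sigma}(\alpha,\beta)$, so the first term vanishes. (This is the same observation used at the end of the proof of Theorem~\ref{thm:stability}.)

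For the second term, \eqref{eq:spectral_bound_on_K} with $\epsilon = \sigma$ gives $\abs{\Ll^{K}_{\sigma,\eta}(\alpha,\beta) - \Ll^{\hat K}_{\sigma,\eta}(\alpha,\beta)} \le \sigma \eta^2 \norm{\alpha}\norm{\beta}\,\norm{K - \hat K}$, and it only remains to control $\norm{K-\hat K}$ in operator norm. By \eqref{eq:fast_estimator}, $\hat K - K = \rho_N^{-1}\pa{A_{1:n,\,n+1:N} - \rho_N W_{1:n,\,n+1:N}}$; extracting a block of rows and columns amounts to left- and right-multiplication by coordinate-projection matrices of operator norm one, so $\norm{\hat K - K} \le \rho_N^{-1}\norm{A - \rho_N W}$. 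Applying Theorem~\ref{thm:Lei} to the $N\times N$ adjacency matrix (legitimate since $\rho_N \gtrsim \log N/N$ here) gives, with probability at least $1-N^{-r}$, $\norm{A - \rho_N W} \le c_r\sqrt{\rho_N N}$, hence $\norm{\hat K - K} \le c_r\sqrt{N/\rho_N}$ on that event. Combining with $\norm{\alpha} \le \sqrt n\,\norm{\alpha}_\infty \le c_\alpha/\sqrt n$, $\norm{\beta} \le c_\beta/\sqrt m$, and $n\sim m$ (so $nm \gtrsim N^2$),
\[
  \abs{\Ww^{C}_{\sigma}(\alpha,\beta) - \Ll^{\hat K}_{\sigma,\eta}(\alpha,\beta)}
  \lesssim \frac{c_r c_\alpha c_\beta\,\sigma\eta^2\sqrt N}{\sqrt{nm}\,\sqrt{\rho_N}}
  \lesssim \frac{c_r c_\alpha c_\beta\,\sigma\eta^2}{\sqrt{\rho_N N}},
\]
which is \eqref{eq:fast_ot_bound}.

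The only genuine subtlety is the one flagged before the statement: since $\hat K$ may have entries arbitrarily close to $0$, Lemma~\ref{lem:dualbound} is unavailable on the $\hat K$-side, so one cannot define a ``$\Ww^{\hat C}_\sigma$'' and must instead commit to the box-constrained functional $\Ll^{\hat K}_{\sigma,\eta}$ as the estimator, carrying the factor $\eta^2$ throughout. Everything else --- the submatrix operator-norm inequality, the reuse of \eqref{eq:spectral_bound_on_K}, and the norm bounds on $\alpha,\beta$ --- is routine. The improvement over Corollary~\ref{cor:usvt} comes entirely from using the spectral bound \eqref{eq:stability_spectral} rather than its Frobenius consequence (Proposition~\ref{prop:spec2F}): this lets Theorem~\ref{thm:Lei}'s $O(\sqrt{\rho_N N})$ operator-norm concentration be used directly, bypassing the USVT low-rank truncation and improving the rate from $(\rho_N N)^{-1/4}$ to $(\rho_N N)^{-1/2}$.
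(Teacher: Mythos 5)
Your proof is correct and follows essentially the same path as the paper's: invoke \eqref{eq:spectral_bound_on_K} on the box-constrained duals, note that the choice of $\eta$ makes $\Ll^{K}_{\sigma,\eta}=\Ww^C_\sigma$ (so your first triangle-inequality term is zero, which is just how the paper phrases it), bound $\norm{K-\hat K}$ by the operator norm of the full matrix $\norm{A/\rho_N - W}$, and finish with Theorem~\ref{thm:Lei} together with $\norm{\alpha}\norm{\beta}\lesssim c_\alpha c_\beta/\sqrt{nm}$ and $nm\gtrsim N^2$. (The paper's own proof writes $2\sigma^2$ in a couple of places where, per \eqref{eq:gaussian_kernel}--\eqref{eq:fast_choices}, $\sigma$ is meant; your consistent use of $\sigma$ is the right reading.)
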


\begin{figure}
  \centering
  \includegraphics[width=.22\textwidth]{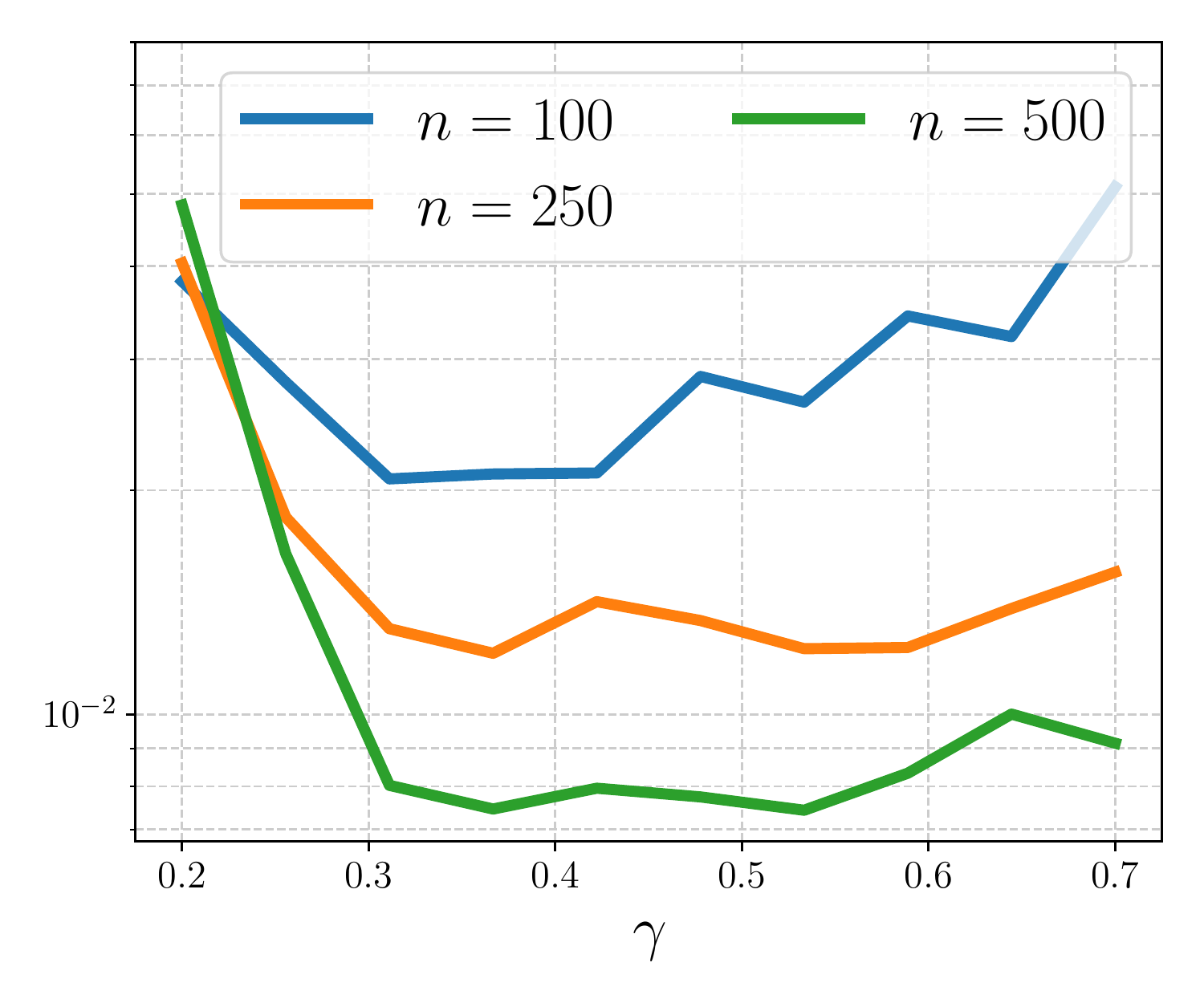}
  \includegraphics[width=.22\textwidth]{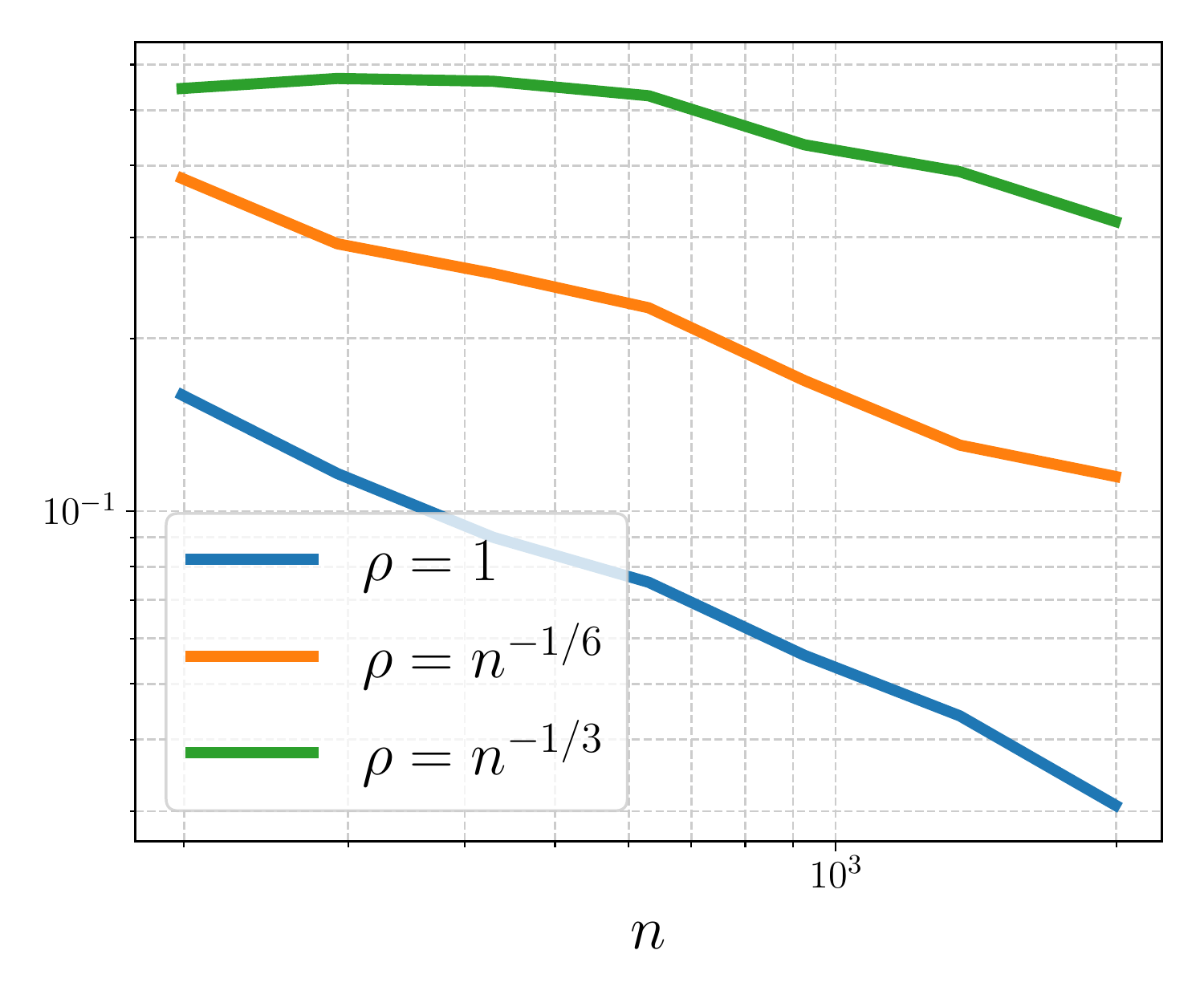} \\
  \includegraphics[width=.22\textwidth]{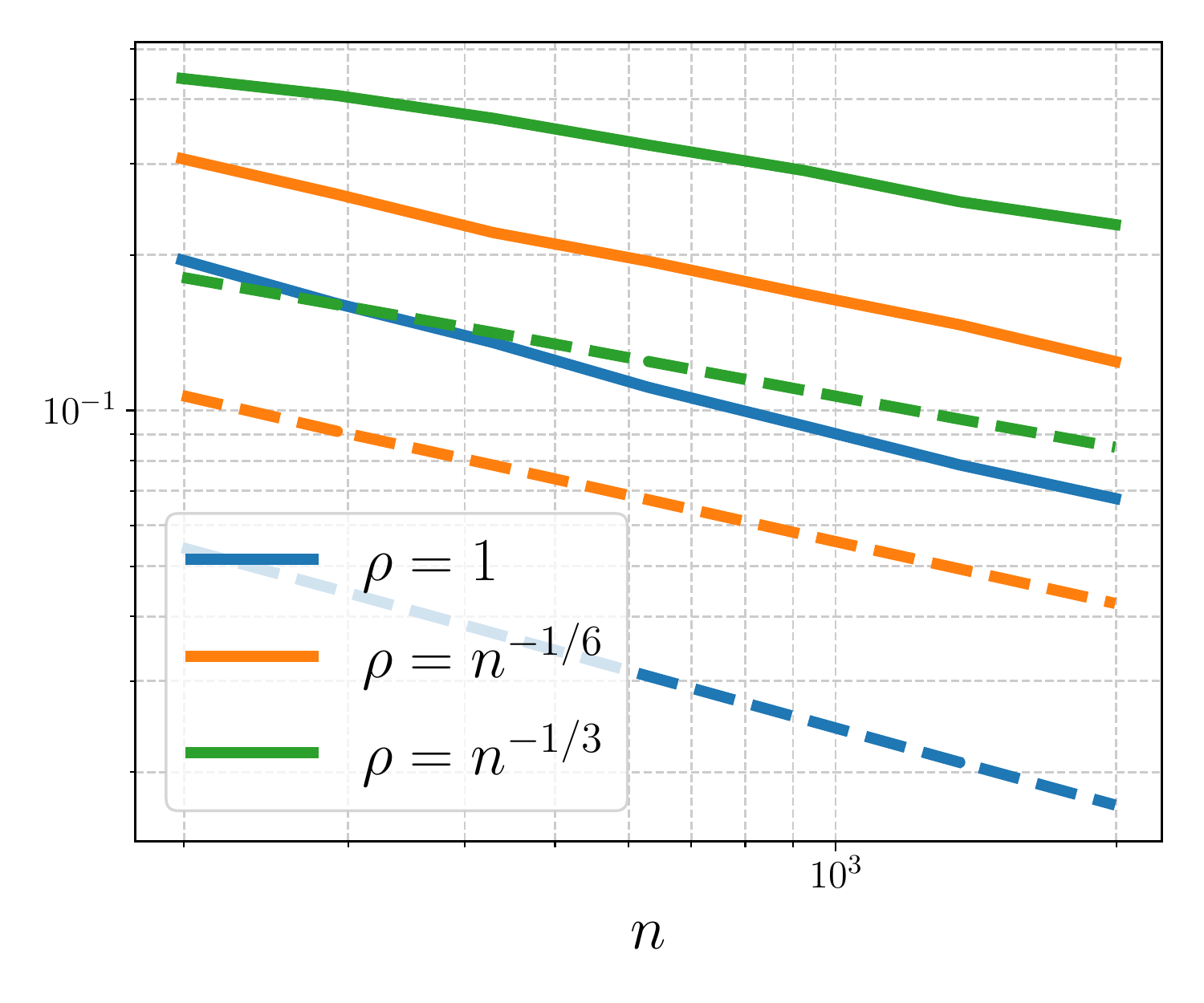}
  \includegraphics[width=.22\textwidth]{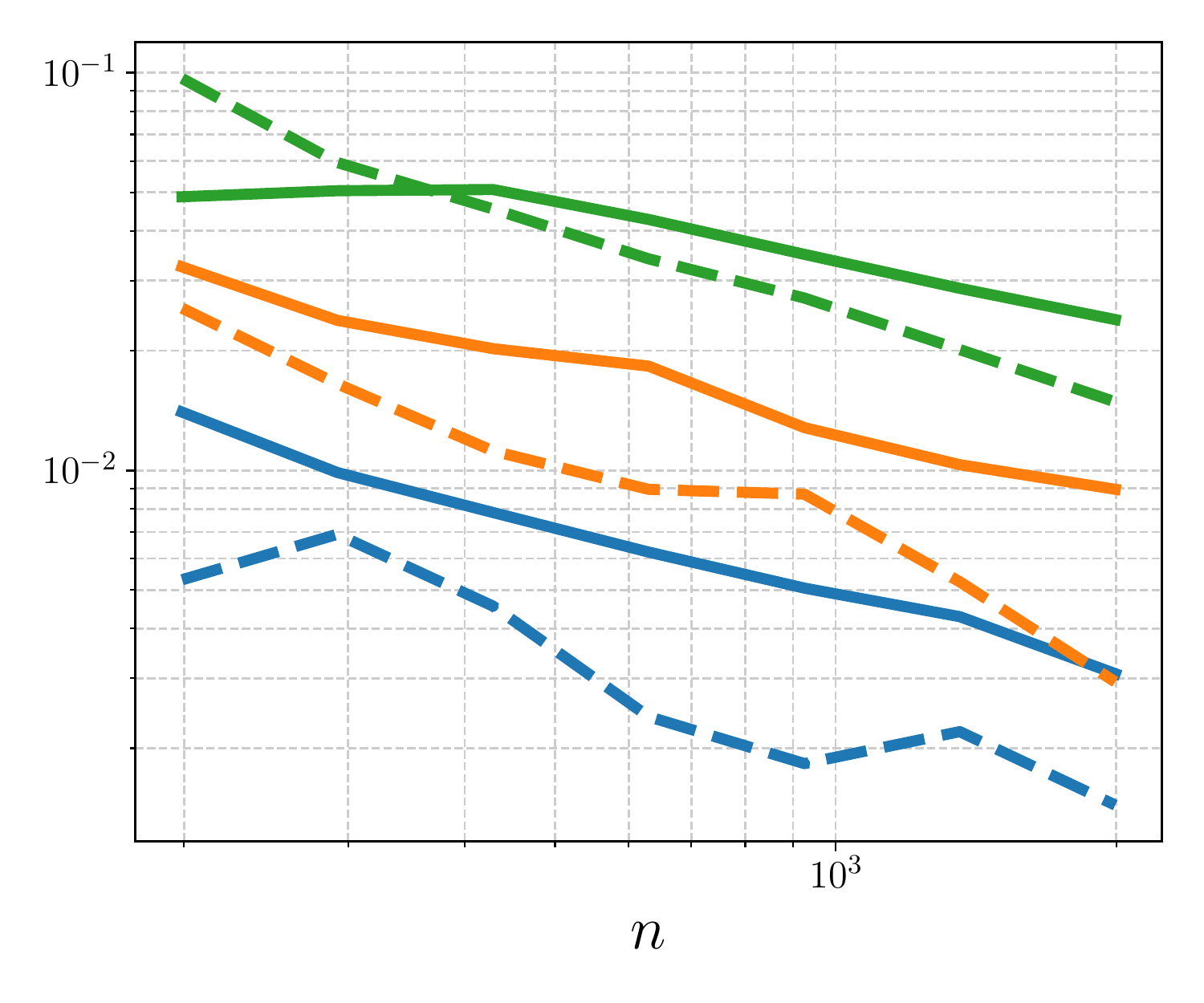}
  \caption{\small \textbf{Top left:} Stability of the normalized error $\abs{1 - \frac{\Ww_\epsilon^{\hat C_{\gamma_r}}}{\Ww_\epsilon^{C}}}$ w.r.t. $\gamma$. \textbf{Top right:} Convergence of the transport plan $\KL(P^C|P^{\hat C})$, for different sparsity levels $\rho_N$. \textbf{Bottom left:} norm $\frac{1}{N}\norm{\hat W_{\gamma_r}-W}_F$ (full line) or $\frac{1}{N}\norm{\hat K - K}$ (dotted line). \textbf{Bottom right:} normalized error $\abs{1 - \frac{\Ww_\epsilon^{\hat C_{\gamma_r}}}{\Ww_\epsilon^{C}}}$ for the USVT (full line) or $\abs{1 - \frac{\Ll_{\epsilon,\eta}^{\hat K}}{\Ww_\epsilon^{C}}}$ for the fast (dotted line) estimators.}
  \vspace{-10pt}
  \label{fig:nonlocal}
\end{figure}

\subsection{Numerical illustration}

We conclude this section by simple illustrative experiments. We generate random graphs with non-local Gaussian kernel with the nodes divided in two groups, as shown in Fig.~\ref{fig:nonlocal_single}, with $m=2n$ and $n \to \infty$. For the USVT estimator, we use the cost \eqref{eq:usvt_cost} with $f(w)=1-w$. Recall that for the ``fast'' estimator \eqref{eq:fast_estimator}, the cost and regularization parameter $\epsilon$ are fixed by \eqref{eq:fast_choices}.
In Fig.~\ref{fig:nonlocal} (top left), we examine the stability of the USVT estimator with respect to the parameter $\gamma_r$. Surprisingly, the estimation of $\Ww_\epsilon^C$ does seem quite robust to the choice of $\gamma$ in our example, particularly when $n$ is high. Future investigation will seek to quantify this phenomenon and introduce an estimation procedure for $\gamma$.
In Fig.~\ref{fig:nonlocal} (bottom), we compare convergence rates of the two estimators, indeed observing that the direct estimator \eqref{eq:fast_estimator} is faster than the USVT estimator, while being less flexible.

\section{Conclusion and outlooks}

In this paper, we have shown that estimation of Wasserstein distances between nodes in latent position random graphs is theoretically possible, despite the fact that the latent positions are not known in general. The proofs are modular and indicate which conditions any estimator must satisfy for this to be true. We gave three distinct examples related to classical random graphs.
Our theoretical work hints at many potential outlooks. We have generally assumed, for simplicity, that several parameters such as $\rho_N, h_N$ were known. Depending on the context, they can be estimated. For instance, another way of dealing with unknown sparsity is to use the normalized Laplacian, which automatically removes the dependency on $\rho_N$ \cite{Keriven2020}, but leads to a different kernel. Future work will also examine more practical applications of OT in graphs and compare it to other methods, for clustering or to compute distance-based node embeddings \cite{Rossi2020}. As another example, OT \emph{barycenters} \cite{Agueh2011} on $\epsilon$-graphs might be a fast and consistent way to compute geodesic barycenters on manifolds \cite{Peyre2010}. Finally, at the graph level, infinite-node limits of the Gromov-Wasserstein distance \cite{Memoli2011, Memoli2014} are still to be properly studied.

\small

\bibliographystyle{myabbrvnat}
\bibliography{library}

\normalsize
\appendix
\section{Additional proofs}

\subsection{Proof of Lemma \ref{lem:dualbound}}

For simplicity, we denote $\Ll(f,g) = \Ll^K(f,g,\alpha,\beta)$.
Let us first consider the case where $\delta_{\max} = 1$.
Take $(f^*, g^*)$ solution of $\max_{f,g} \Ll(f,g)$. Since $\alpha^\top K \beta \leq 1$, we have $\Ll(f^*,g^*) \geq \Ll(0,0) \geq 0$. Moreover, by first-order conditions we have $f^* = -\epsilon \log(K(e^{g^*/\epsilon} \odot \beta))$, and therefore $(e^{f^*/\epsilon} \odot \alpha)^\top K (e^{g^*/\epsilon} \odot \beta)=1$ and $\Ll(f^*,g^*) = \alpha^\top f^* + \beta^\top g^*$. Since taking $(f^* + c 1_n, g^*-c 1_m)$ for any $c \in \RR$ does not change the cost function, without lost of generality we assume that $\alpha^\top f^* = \beta^\top g^* = \frac12 \Ll(f^*,g^*) \geq 0$.
Using the above identity, Jensen's inequality and the fact that $\beta^\top g^* \geq 0$, we have for all $i$
\begin{align*}
  f^*_i &= -\epsilon \log\sum_j e^{g^*_j/\epsilon}\beta_j K_{ij} \\
  &\leq -\epsilon \sum_j \beta_j \log\pa{e^{g^*_j/\epsilon} K_{ij}} \\
  &=-\beta^\top g^* - \epsilon \sum_j \beta_j \log(K_{ij}) \leq \epsilon \log(1/\delta_{\min})
\end{align*}
Similarly, $g^*_j \leq \epsilon \log(1/\delta_{\min})$ by the same reasoning. Then, since $K_{ij}\leq 1$ we have $\sum_j e^{g^*_j/\epsilon}\beta_j K_{ij} \leq 1/\delta_{\min}$, and by the same identity
\begin{align*}
  f^*_i &= -\epsilon \log\sum_j e^{g^*_j/\epsilon}\beta_j K_{ij} \geq -\epsilon \log(1/\delta_{\min})
\end{align*}
and similarly for $g^*$, hence $\norm{f^*}_\infty, \norm{g^*}_\infty \leq \epsilon \log(1/\delta_{\min})$.
In the general case, we define $\tilde K = K/\delta_{\max}$. Considering $(\tilde f, \tilde g)$ solution of $\max_{f,g} \Ll^{\tilde K}(f,g)$, from what precedes we have $\norm{\tilde f}_\infty, \norm{\tilde g}_\infty \leq \epsilon \log(\delta_{\max}/\delta_{\min})$, and by the first order conditions the couple $(\tilde f + \epsilon \log(\delta_{\max})/2, \tilde g + \epsilon \log(\delta_{\max})/2)$ is solution of $\max_{f,g} \Ll^K(f,g)$ with the original $K$. Using $\abs{\epsilon \log(\delta_{\max})/2} = \epsilon \log(1/\sqrt{\delta_{\max}})$, we conclude.

\subsection{Proof of Lemma \ref{lem:stability_TP}}\label{app:stability_TP}

We write $\Ll(f,g) \eqdef \Ll^K(f,g) \eqdef \Ll_\epsilon^K(\alpha, \beta, f, g)$ for simplicity, and $f^C, g^C$ maximizing potentials which satisfy $\norm{f^C}_\infty, \norm{g^C}_\infty \leq \bar c$ by Lemma \ref{lem:dualbound}. Note that first order conditions state that $\nabla \Ll(f^C, g^C) = 0$.

Recall that the function $\phi: x \to e^{x/\epsilon}$ is $e^{a/\epsilon}/\epsilon^2$-strongly convex on the interval $[a,b]$ (by lower-bounding its second derivative) and therefore we have, for all $x,x' \in [a,b]$ and $t \in [0,1]$: $\phi(tx + (1-t)x') \leq t\phi(x) + (1-t)\phi(x') - \frac{e^{a/\epsilon}}{2\epsilon^2} t(1-t) |x-x'|^2$. Hence, for any $f,g$ satisfying $\norm{f}_\infty, \norm{g}_\infty \leq \bar c$ and $t$, we have
\begin{align*}
  &\Ll(tf + (1-t)f^K, tg + (1-t)g^K)\\
  &= t(\alpha^\top f + \beta^\top g) + (1-t)(\alpha^\top f^C + \beta^\top g^C) \\
  &\quad - \epsilon \sum_{ij}K_{ij} \alpha_i \beta_j e^{\frac{t(f_i + g_j)+(1-t)(f^C_i + g^C_j)}{\epsilon}} + \epsilon \\
  &\geq t \Ll(f,g) + (1-t)\Ll(f^C, g^C) \\
  &\quad+ \epsilon\sum_{ij} K_{ij} \alpha_i \beta_j \frac{e^{-2\bar c/\epsilon}t(1-t)}{2\epsilon^2} \abs{f_i + g_j - (f^C_i + g^C_j)}^2
\end{align*}

We divide by $t$ and take the limit $t \to 0$: using the fact that $\frac{\Ll(tf + (1-t)f^C, tg + (1-t)g^C) - \Ll(f^C, g^C)}{t} \to \nabla \Ll(f^C, g^C)^\top [f-f^C, g-g^C] = 0$, we conclude the proof.
  
\subsection{Proof of Theorem \ref{thm:usvt}}

By Theorem \ref{thm:Lei}, with probability at least $1-n^{-r}$, we have
\begin{equation}
  \norm{A - \rho_n W} \leq c_r \sqrt{\rho_n n}
\end{equation}
Assume that this is satisfied.
Decompose $W = \sum_i \tau_i w_i w_i^\top$. Denote by $S \subset \{1,\ldots, n\}$ the indices such that $\sigma_i \geq \gamma\sqrt{\rho_n n}$. Define
\begin{align*}
  G = \sum_{i \in S} \tau_i w_i w_i^\top, \qquad \hat A = \sum_{i \in S} \sigma_i a_i a_i^\top
\end{align*}
such that $\hat W = \textup{HT}_{[w_{\min},w_{\max}]}(\hat A/\rho_n)$. Since the hard thresholding function is $1$-Lipschitz and the entries of $W$ are between $w_{\min}$ and $w_{\max}$, we have
\begin{equation*}
  \norm{\hat W - W}_F \leq \norm{\hat A/\rho_n - W}_F
\end{equation*}
Now we decompose
\begin{equation}\label{eq:proof_usvt_1}
  \norm{\hat A/\rho_n - W}_F \leq \norm{\hat A/\rho_n - G}_F + \norm{G - W}_F
\end{equation}

To bound the first term, we observe that $\hat A$ and $G$ are both rank $\abs{S}$, so
\begin{equation*}
  \norm{\hat A/\rho_n - G}_F \leq \sqrt{2\abs{S}} \norm{\hat A/\rho_n - G}
\end{equation*}

We then decompose 
\begin{equation*}
  \norm{\hat A/\rho_n - G} \leq \frac{1}{\rho_n}\norm{\hat A - A} + \norm{A/\rho_n- W} + \norm{W - G}
\end{equation*}
By definition, $A-\hat A = \sum_{i \notin S} \sigma_i a_i a_i^\top$ so $\norm{A-\hat A} \leq \gamma\sqrt{\rho_n n}$. We have assumed that $\norm{A- \rho_n W}\leq c_r \sqrt{\rho_n n}$ holds, and moreover by Kato inequality \cite[e.g.]{Rosasco2010}:
\begin{equation*}
  \max_i \abs{\sigma_i - \rho_n \tau_i} \leq \norm{A - \rho_n W}\leq c_r \sqrt{\rho_n n}
\end{equation*}
and therefore, for all $i \notin S$
\begin{equation}\label{eq:proof_usvt_bound_tau}
  0 \leq \rho_n\tau_i \leq \sigma_i + c_r \sqrt{\rho_n n} \leq (\gamma+c_r)\sqrt{\rho_n n}\, .
\end{equation}
Thus, $\norm{W - G} = \norm{\sum_{i \notin S}\tau_i y_i y_i^\top} \leq (\gamma + c_r) \sqrt{n/\rho_n}$. At the end of the day, $\norm{\hat A/\rho_n - G} \leq 2(\gamma + c_r)\sqrt{n/\rho_n}$.

We then bound the size of the support. For $i \in S$,
\begin{equation*}
  \rho_n \tau_i \geq \sigma_i - \norm{A - \rho_n W} \geq (\gamma - c_r)\sqrt{\rho_n n}
\end{equation*}
and thus 
\begin{equation*}
  \rho_n \norm{W}_\star \geq \sum\nolimits_{i\in S} \rho_n \tau_i \geq \abs{S}(\gamma - c_r)\sqrt{\rho_n n}
\end{equation*}
and 
$
\abs{S} \leq \frac{\norm{W}_\star}{\gamma - c_r}\sqrt{\rho_n/n}
$.
At the end of the day,
\begin{equation*}
  \norm{\hat A/\rho_n - G}_F \leq 2 \sqrt{2} \frac{\gamma + c_r}{\sqrt{\gamma - c_r}} \sqrt{\norm{W}_\star}(n/\rho_n)^{1/4}
\end{equation*}

Fro the second term in \eqref{eq:proof_usvt_1}, by \eqref{eq:proof_usvt_bound_tau} we have 
\begin{align*}
  \norm{G-W}_F^2 &= \sum\nolimits_{i \notin S} \tau_i^2 \leq (\gamma+c_r)\sqrt{n/\rho_n}\sum\nolimits_{i \notin S} \tau_i \\
  &\leq (\gamma+c_r)\sqrt{n/\rho_n}\norm{W}_\star
\end{align*}
We conclude by $\norm{W}_\star = Tr(W) \leq n$.

\subsection{Proof of Theorem \ref{thm:geodesic}}\label{app:geodesic}

We start with the following Lemma.

\begin{lemma}
  Consider $\{x,y,z_1,\ldots, z_N\} \subset \Mm$ with $z_i$ iid from $\nu$. Then, for all $h_N\leq \min(h_\Mm,c_\Mm^{-1})$ and $0<\lambda_N <1/2$, if $\norm{x-y} > h_N$: with probability at least $1- e^{-c_z c_\Bb N (\lambda_N h_N)^k}$, there is $i$ such that $\norm{x-z_i} \leq h_N$ and $d(z_i, y) \leq d(x,y) - h_N(1-3\lambda_N - c_\Mm h_N)$.
\end{lemma}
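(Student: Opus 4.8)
The lemma is the one-step engine behind the geodesic convergence in Theorem~\ref{thm:geodesic}: it asserts that from $x$ one can always hop, along a single graph edge, to a sampled node that is genuinely closer to $y$ \emph{along the manifold}, the gain being of order $h_N$. My plan is to pin down a deterministic target point on a minimizing geodesic from $x$ to $y$, and then use the density lower bound together with the ball-volume lower bound to show that with overwhelming probability at least one of the i.i.d.\ samples $z_1,\dots,z_N$ lands in a small ball around that target.

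For the set-up: since $\Mm$ is compact (hence complete), fix a unit-speed minimizing geodesic $\gamma:[0,d(x,y)]\to\Mm$ with $\gamma(0)=x$ and $\gamma(d(x,y))=y$. Because $d(x,y)\ge\norm{x-y}>h_N$, the point $x^\star\eqdef\gamma(t^\star)$ with $t^\star\eqdef h_N(1-2\lambda_N)$ is well defined and lies strictly between $x$ and $y$; by minimality, $d(x,x^\star)=t^\star$ and $d(x^\star,y)=d(x,y)-t^\star$. For the probabilistic step, let $B\eqdef\Bb_{\lambda_N h_N}(x^\star)$ be the geodesic ball around $x^\star$. Since $\lambda_N h_N<h_N\le h_\Mm$ is below the scale at which the volume bound holds, $\mu(B)\ge c_\Bb(\lambda_N h_N)^k$, and because the sampling density is $\ge c_z$ we get $\nu(B)\ge c_z c_\Bb(\lambda_N h_N)^k$. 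As the $z_i$ are i.i.d.\ from $\nu$, the probability that none of them lies in $B$ is $(1-\nu(B))^N\le e^{-N\nu(B)}\le e^{-c_z c_\Bb N(\lambda_N h_N)^k}$. On the complementary event, fix any index $i$ with $z_i\in B$, i.e.\ $d(x^\star,z_i)\le\lambda_N h_N$.

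It then remains to check the two claimed properties for this $z_i$, using only the triangle inequality for $d$ and the bound $d\ge\norm{\cdot}$. First, $\norm{x-z_i}\le d(x,z_i)\le d(x,x^\star)+d(x^\star,z_i)\le t^\star+\lambda_N h_N=h_N(1-\lambda_N)\le h_N$, so the unweighted edge $\{x,z_i\}$ is present in the graph. Second, $d(z_i,y)\le d(x^\star,z_i)+d(x^\star,y)\le\lambda_N h_N+d(x,y)-t^\star=d(x,y)-h_N(1-3\lambda_N)$, which is in fact slightly stronger than the asserted $d(x,y)-h_N(1-3\lambda_N-c_\Mm h_N)$; the slack $c_\Mm h_N$ merely leaves room for the Euclidean-versus-geodesic correction \eqref{eq:geodesic} that becomes relevant when the lemma is iterated and when the starting point is only known up to a net.

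The step I expect to be the only delicate point is the calibration of $t^\star$: it must be small enough that the detour $\lambda_N h_N$ needed to land on an actual sample keeps $\norm{x-z_i}$ within the connectivity radius $h_N$, yet as large as possible so that each hop buys progress of order $h_N$ toward $y$ — this is exactly what controls the number of edges of the shortest path, hence $\hat d_{ij}$, once the lemma is applied recursively in the proof of Theorem~\ref{thm:geodesic} (and it is what produces the $c_\Mm h_N$ and the $\lambda_N$-type terms in $R_N$). The hypotheses $\lambda_N<1/2$ and $h_N\le\min(h_\Mm,c_\Mm^{-1})$ are precisely what guarantee that $t^\star>0$ and that $\lambda_N h_N$ stays below the scale at which the volume lower bound and \eqref{eq:geodesic} are valid; note also that the statement is only non-vacuous for $\lambda_N$ small enough that $1-3\lambda_N>0$.
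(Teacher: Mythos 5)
Your proof is correct, and it takes a slightly different — and in fact cleaner — geometric route than the paper's. The paper picks the intermediate point $x'$ on the geodesic by prescribing its \emph{Euclidean} distance $\norm{x-x'}=h_N(1-2\lambda_N)$ from $x$, and then needs the third-order comparison \eqref{eq:geodesic} in two places: once to lower-bound $d(x,x')$, and once to translate the geodesic-ball condition $d(x',z)\le\lambda_N h_N$ into the Euclidean estimate $\norm{z-x'}\le 2\lambda_N h_N$ needed for the edge $\{x,z\}$. You instead prescribe the \emph{geodesic} parameter $t^\star=h_N(1-2\lambda_N)$, so $d(x,x^\star)=t^\star$ holds with equality, and the only fact you use to pass from geodesic to Euclidean is the one-sided bound $\norm{\cdot}\le d(\cdot,\cdot)$, which requires no hypothesis on scale. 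Both constructions are sound; yours avoids \eqref{eq:geodesic} entirely in this lemma and yields the slightly sharper conclusion $d(z_i,y)\le d(x,y)-h_N(1-3\lambda_N)$, which dominates the stated bound. The price the paper pays — the $c_\Mm h_N$ slack in the lemma — is therefore an artifact of its Euclidean parametrization rather than a necessary cost; the comparison \eqref{eq:geodesic} only genuinely enters when the lemma is iterated in Theorem~\ref{thm:geodesic}, to lower-bound the shortest path by the Euclidean chain length, exactly as you anticipate. One small remark: the paper's chain $\norm{z-x'}\le d(z,x')+c_\Mm\norm{z-x'}^3\le\cdots\le 2\lambda_N h_N$ is in fact weaker than the immediate $\norm{z-x'}\le d(z,x')\le\lambda_N h_N$, so your simplification is not just cosmetic.
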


\begin{proof}
  Call $\gamma_{xy} \subset \Mm$ the geodesic path between $x$ and $y$. For each point $x' \in \gamma_{xy}$, we have $d(x,y) = d(x,x') + d(x',y)$.
  Pick a point $x' \in \gamma_{xy}$ that is also on the sphere $\norm{x-x'} = h_N(1-2\lambda_N)$. Then,
  by \eqref{eq:geodesic},
  \begin{align*}
      d(x,x') &\geq \norm{x-x'} - c_\Mm \norm{x-x'}^3 \\
      &= h_N(1-2\lambda_N) (1-c_\Mm h_N^2(1-2\lambda_N)^2) \\
      &\geq h_N(1-2\lambda_N)(1-c_\Mm h_N^2)
  \end{align*}
  and therefore, since $x'\in \gamma_{xy}$,
  \begin{align*}
      d(x',y) &= d(x,y) - d(x,x') \\
      &\leq d(x,y) - h_N(1-2\lambda_N)(1-c_\Mm h_N^2)
  \end{align*}

  Now, consider the ball $B = \{z \in \Mm;~d(x',z)\leq \lambda_N h_N\}$. It has a measure $\mu(B) \geq c_\Bb (\lambda_N h_N)^k$ such that $\nu(B) \geq c_z c_\Bb (\lambda_N h_N)^k$, and with probability at least $1-(1-c_z c_\Bb(\lambda_N h_N)^k)^N \geq 1- e^{-c_z c_\Bb N (\lambda_N h_N)^k}$, there is a $z_i \in B$. Finally, since $\norm{\cdot-\cdot} \leq d(\cdot,\cdot)$, by \eqref{eq:geodesic} all points $z \in B$ satisfy
  \begin{align*}
      \norm{z-x'} &\leq d(z,x') + c_\Mm \norm{z-x'}^3 \\
      &\leq d(z,x') + c_\Mm d(z,x')^3 \\
      &\leq \lambda_N h_N + c_\Mm \lambda_N^3 h_N^3 \leq 2\lambda_N h_N
  \end{align*}
  and thus $\norm{z-x} \leq \norm{z-x'}+\norm{x-x'} \leq h_N$, and
  \begin{align*}
      d(z,y) &\leq d(z,x') + d(x',y) \\
      &\leq \lambda_N h_N + d(x,y) - h_N(1-2\lambda_N)(1-c_\Mm h_N^2) \\
      &\leq d(x,y) - h_N(1-3\lambda_N - c_\Mm h_N^2)
  \end{align*}
\end{proof}

We now prove Theorem \ref{thm:geodesic}. Consider some $x=x_i$ and $y=y_j$. Using the previous Lemma: with probability $1- e^{-c_z c_\Bb (N-n-m) (\lambda_N h_N)^k}$, there is a $z_{i_1}$ such that there is an edge $(x z_{i_1)})$ and $d(z_{i_1},y) \leq d(x,y) - h_N(1-3\lambda_N - c_\Mm h_N^2)$. Conditionally on $z_{i_1}$, we apply the same result on $z_{i_1}$ with the $N-n-m-1$ remaining points: either $\norm{z_{i_1}-y}\leq h_N$ and we take $z_{i_2}=z_{i_1}$, or with probability $1- e^{-c_z c_\Bb (N-n-m-1) (\lambda_N h_N)^k}$, there is $z_{i_2}$ connected to $z_{i_1}$ such that $d(z_{i_1},y) \leq d(x,y) - 2 h_N(1-3\lambda_N - c_\Mm h_N^2)$, so by a union bound, both the existence and $z_{i_1}$ and $z_{i_2}$ are guaranteed with probability at least $1-2e^{-c_z c_\Bb (N-n-m-1) (\lambda_N h_N)^k}$.
We repeat this process $M = \left\lfloor\frac{d(x,y)}{h_N(1-3\lambda_N - c_\Mm h_N^2)}\right\rfloor \lesssim D_\Xx/h_N$ times to obtain a path $z_{i_1},\ldots, z_{i_M}$. With probability $1-Me^{-c_z c_\Bb (N-n-m-M) (\lambda_N h_N)^k}$, either one of the $z_{i_\ell}$ is such that $\norm{z_{i_\ell}-y}\leq h_N$ and $z_{i_M} = z_{i_\ell}$, or:
\begin{align*}
  \norm{z_{i_M}-y} &\leq d(z_{i_M},y) \\
  &\leq d(x,y) - M h_N(1-3\lambda_N - c_\Mm h_N^2) \\
  &\leq h_N(1-3\lambda_N - c_\Mm h_N^2) \leq h_N
\end{align*}
We then choose $\lambda_N$ to adjust the probability to $1-\rho$, and use $\frac{1}{1-x}  = 1+\order{x}$ to obtain the upper bound on $h_N M/d(x,y)$, and therefore $h_N \textup{SP}(x,y)/d(x,y)$: since $n+m+1/h_N = o(N)$,
\begin{align*}
  \frac{h_N \textup{SP}(x,y)}{d(x,y)} &\leq \frac{h_N M}{d(x,y)} \\
  &\leq 1+ \order{c_\Mm h_N^2 + \pa{\frac{\log \frac{D_\Xx}{h_N \rho}}{N h_N^k}}^{\frac{1}{k}}}
\end{align*}

For the lower bound, we remark that the shortest path $z_{j_0} = x, z_{j_1}, \ldots, z_{j_L}=y$ (with $L\leq M$) satisfies:
\begin{align*}
  h_N \textup{SP}(x,y) &\geq \sum\nolimits_\ell \norm{z_{j_\ell} - z_{j_{\ell+1}}} \\
  &\geq \sum\nolimits_\ell d(z_{j_\ell}, z_{j_{\ell+1}}) - c_\Mm \sum\nolimits_\ell \norm{z_{j_\ell} - z_{j_{\ell+1}}}^3 \\
  &\geq d(x,y) - c_\Mm h_N^3 \textup{SP}(x,y) \\
  &\geq d(x,y) - c_\Mm h_N^3 M
\end{align*}
and we use the bound on $M$ to conclude. We finish the proof with a union bound over all pairs $(x_i,y_j)$.

\subsection{Proof of Theorem \ref{thm:fast}}\label{app:fast}

We use \eqref{eq:spectral_bound_on_K} from the proof of Theorem \ref{thm:usvt} to get
\begin{equation*}
  \abs{\Ll_{2\sigma^2,\eta}^{\hat K}(\alpha,\beta) - \Ll_{2\sigma^2,\eta}^{K}(\alpha,\beta)} \leq  \tfrac{ 2\sigma^2\eta^{2} c_\alpha c_\beta \norm{K-\hat K}}{\sqrt{nm}} 
\end{equation*}
By our choice of $\eta$, we have $\Ll_{2\sigma^2,\eta}^{K}(\alpha,\beta) = W^C_{2\sigma^2}(\alpha,\beta)$. Then, since the operator norm of any submatrix is smaller than the norm of the whole matrix
, we have
\begin{equation*}
  \norm{K - \hat K} \leq \norm{A/\rho_N - W}
\end{equation*}
Then, we use Theorem \ref{thm:Lei} to conclude.

\end{document}